%%%%%%%% ICML 2021 EXAMPLE LATEX SUBMISSION FILE %%%%%%%%%%%%%%%%%

\documentclass{article}

% Recommended, but optional, packages for figures and better typesetting:
\usepackage{microtype}
\usepackage{graphicx}
\usepackage{subfigure}
\usepackage{booktabs} % for professional tables

\usepackage{amsfonts}       % blackboard math symbols
\usepackage{amsmath}
\usepackage{amsthm}
\usepackage{ amssymb }
\usepackage{todonotes}
\usepackage{lipsum}
\usepackage{bbold}
\usepackage{makecell}
\usepackage{color, colortbl}

\usepackage{enumitem}

% hyperref makes hyperlinks in the resulting PDF.
% If your build breaks (sometimes temporarily if a hyperlink spans a page)
% please comment out the following usepackage line and replace
% \usepackage{icml2021} with \usepackage[nohyperref]{icml2021} above.
\usepackage{hyperref}

\newtheorem{theorem}{Theorem}
\theoremstyle{definition}
\newtheorem{definition}{Definition}

%%%%%%%%%%%%%%%%%%%%%%%%%%%%%%%%%%%%%%%%%% CHOOSE VERSION %%%%%%%%%%%%%%%%%%%%%%%%%%%%%%%%%%%%%%%%%%
% ARXIV version
\newcommand{\macrocite}[1]{\cite{#1}}
\newcommand{\suppcite}[1]{\cite{#1}}
\newcommand{\mainrefs}{\bibliographystyle{icml2021}\bibliography{references}}
\newcommand{\supprefs}{}

% ICML MAIN FILE
% \newcommand{\macrocite}[1]{\cite{#1}}
% \newcommand{\suppcite}[1]{}
% \newcommand{\mainrefs}{\bibliographystyle{icml2021}\bibliography{references}}
% \newcommand{\supprefs}{}

% ICML SUPPLEMENT FILE
% \newcommand{\macrocite}[1]{}
% \newcommand{\suppcite}[1]{\cite{#1}}
% \newcommand{\mainrefs}{}
% \newcommand{\supprefs}{\bibliographystyle{icml2021}\bibliography{references}}

%%%%%%%%%%%%%%%%%%%%%%%%%%%%%%%%%%%%%%%%%%%%%%%%%%%%%%%%%%%%%%%%%%%%%%%%%%%%%%%%%%%%%%%%%%%%%%%%%%%%

% Attempt to make hyperref and algorithmic work together better:

% Use the following line for the initial blind version submitted for review:
\usepackage[preprint]{icml2021}

% If accepted, instead use the following line for the camera-ready submission:
%\usepackage[accepted]{icml2021}

% The \icmltitle you define below is probably too long as a header.
% Therefore, a short form for the running title is supplied here:
\icmltitlerunning{\textsc{FedAUX}: Leveraging Unlabeled Auxiliary Data in Federated Learning}

\begin{document}

\twocolumn[
\icmltitle{\textsc{FedAUX}: Leveraging Unlabeled Auxiliary Data in Federated Learning}
%\icmltitle{\textsc{FedAUX}: Deriving utility from unlabeled Auxiliary data in Federated Distillation}

% It is OKAY to include author information, even for blind
% submissions: the style file will automatically remove it for you
% unless you've provided the [accepted] option to the icml2021
% package.

% List of affiliations: The first argument should be a (short)
% identifier you will use later to specify author affiliations
% Academic affiliations should list Department, University, City, Region, Country
% Industry affiliations should list Company, City, Region, Country

% You can specify symbols, otherwise they are numbered in order.
% Ideally, you should not use this facility. Affiliations will be numbered
% in order of appearance and this is the preferred way.
\icmlsetsymbol{equal}{*}

\begin{icmlauthorlist}
\icmlauthor{Felix Sattler}{to}
\icmlauthor{Tim Korjakow}{to}
\icmlauthor{Roman Rischke}{to}
\icmlauthor{Wojciech Samek}{to}

\end{icmlauthorlist}

\icmlaffiliation{to}{Department of Artificial Intelligence, Fraunhofer HHI, Berlin, Germany}

\icmlcorrespondingauthor{Felix Sattler}{\mbox{felix.sattler@hhi.fraunhofer.de}}
\icmlcorrespondingauthor{Wojciech Samek}{\mbox{wojciech.samek@hhi.fraunhofer.de}}

% You may provide any keywords that you
% find helpful for describing your paper; these are used to populate
% the "keywords" metadata in the PDF but will not be shown in the document
\icmlkeywords{Machine Learning, ICML}

\vskip 0.3in
]

% this must go after the closing bracket ] following \twocolumn[ ...

% This command actually creates the footnote in the first column
% listing the affiliations and the copyright notice.
% The command takes one argument, which is text to display at the start of the footnote.
% The \icmlEqualContribution command is standard text for equal contribution.
% Remove it (just {}) if you do not need this facility.

%\printAffiliationsAndNotice{}  % leave blank if no need to mention equal contribution
\printAffiliationsAndNotice{} % otherwise use the standard text.

\begin{abstract}
Federated Distillation (FD) is a popular novel algorithmic paradigm for Federated Learning, which achieves training performance competitive to prior parameter averaging based methods, while additionally allowing the clients to train different model architectures, by distilling the client predictions on an \emph{unlabeled auxiliary set of data} into a student model. 
In this work we propose \textsc{FedAUX}, an extension to FD, which, under the same set of assumptions, drastically improves performance by deriving maximum utility from the unlabeled auxiliary data. 
\textsc{FedAUX} modifies the FD training procedure in two ways: First, unsupervised pre-training on the auxiliary data is performed to find a model initialization for the distributed training. Second, $(\varepsilon, \delta)$-differentially private certainty scoring is used to weight the ensemble predictions on the auxiliary data according to the certainty of each client model. 
Experiments on large-scale convolutional neural networks and transformer models demonstrate, that the training performance of \textsc{FedAUX} exceeds SOTA FL baseline methods by a substantial margin in both the iid and non-iid regime, further closing the gap to centralized training performance. Code is available at \url{github.com/fedl-repo/fedaux}.
\end{abstract}

\section{Introduction}
\label{sec:intro}
\begin{figure*}[t!]
    \centering
    \includegraphics[width=\textwidth]{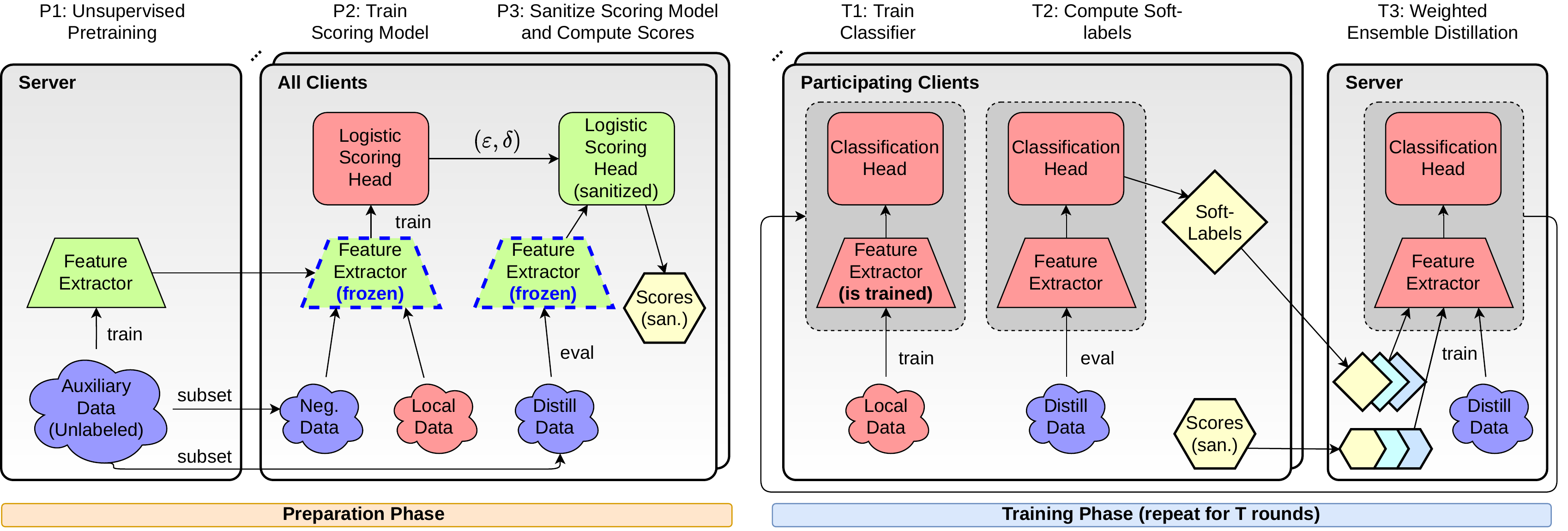}
    \vspace{-0.8cm}
    \caption{Training procedure of \textsc{FedAUX}. \textbf{Preparation phase:} P1) The unlabeled auxiliary data is used to pre-train a feature extractor (e.g. using contrastive representation learning). P2) The feature-extractor is sent to the clients, where it is used to initialize the client models. Based on extracted features, a logistic scoring head is trained to distinguish local client data from a subset of the auxiliary data. P3) The trained scoring head is sanitized using a $(\varepsilon, \delta)$-differentially private mechanism and then used to compute certainty scores on the distillation data.
    \textbf{Training Phase:} T1) In each communication round, a subset of the client population is selected for training. Each selected client downloads a model initialization from the server, and then updates the full model $f_i$ (feature extractor \& scoring head) using their private local data. T2) The locally trained classifier and scoring models $f_i$ and $s_i$ are sent to the server, where they are combined into a weighted ensemble. T3) Using the unlabeled auxiliary data and the weighted ensemble as a teacher, the server distills a student model which is used as the initialization point for the next round of Federated training. \textbf{*}Note that in practice we perform computation of soft-labels and scores at the server to save client resources.}
    \label{fig:overview}
\end{figure*}

Federated Learning (FL) allows distributed entities ("clients") to jointly train (deep) machine learning models on their combined data, without having to transfer this data to a centralized location \macrocite{mcmahan2017communication}. The Federated training process is orchestrated by a central server. The distributed nature of FL improves privacy \macrocite{li2019federated}, ownership rights \macrocite{sheller2020federated} and security \macrocite{mothukuri2020survey} for the participants. As the number of mobile and IoT devices and their capacities to collect large amounts of high-quality and privacy-sensitive data steadily grows, Federated training procedures become increasingly relevant. 

While the client data in Federated Learning is typically assumed to be private, in most real-world applications the server additionally has access to unlabeled \emph{auxiliary} data, which roughly matches the distribution of the client data. For instance, for many Federated computer vision and natural language processing problems, such auxiliary data can be given in the form of public data bases such as ImageNet \macrocite{deng2009imagenet} or WikiText \macrocite{merity2016pointer}. These data bases contain millions to billions of data samples but are typically lacking the necessary label information to be useful for training task-specific models.

Recently, Federated Distillation (FD), a novel algorithmic paradigm for Federated Learning problems where such auxiliary data is available, was proposed. In contrast to classic parameter averaging based FL algorithms \macrocite{mcmahan2017communication, mohri2019agnostic, reddi2020adaptive, li2019fedprox, sattler2019robust}, which require all client's models to have the same size and structure, FD allows the clients to train heterogeneous model architectures, by distilling the client predictions on the auxiliary set of data into a student model. This can be particularly beneficial in situations where clients are running on heterogeneous hardware. Studies show that FD based training has favorable communication properties \macrocite{itahara2020distill, sattler2020communication}, and can outperform parameter averaging based algorithms \macrocite{lin2020ensemble}. 

However, just like for their parameter-averaging-based counterparts, the performance of FD based learning algorithms falls short of centralized training and deteriorates quickly if the training data is distributed in a heterogeneous ("non-iid") way among the clients. In this work we aim to further close this performance gap, by exploring the core assumption of FD based training and deriving maximum utility from the available unlabeled auxiliary data. Our main contributions are as follows:
\begin{itemize}
    \item We show that a wide range of (out-of-distribution) auxiliary data sets are suitable for self-supervised pre-training and can drastically improve FL performance across all baselines. 
    \item We propose a novel certainty-weighted FD technique, that improves performance of FD on non-iid data substantially, addressing a long-standing problem in FL research.
    \item We propose an $(\varepsilon, \delta)$-differentially private mechanism to constrain the privacy loss associated with transmitting certainty scores. 
\end{itemize}
These performance improvements are possible a) under the same assumptions made in the FD literature, b) with only negligible additional computational overhead for the resource-constrained clients and c) with small quantifiable excess privacy loss.

\section{Related Work}
\label{sec:related}
%A variety of studies have exploited unlabeled auxiliary data in the context of FL.
\textbf{Federated Distillation:}
Distillation \macrocite{bucila2006compression, hinton2015distill} is a common technique to transfer the knowledge of one or multiple \macrocite{you2017learning, anil2018large} machine learning classifiers to a different model, and is typically used in centralized settings before deployment in order to reduce the model complexity, while preserving predictive power. To this end, the predictions of the teacher model(s) on a distillation data set are used to guide the training process of the potentially less complex student model. Federated Distillation (FD) algorithms, which leverage these distillation techniques to aggregate the client knowledge, are recently gaining popularity, because they outperform conventional parameter averaging based FL methods \macrocite{lin2020ensemble, chen2020feddistill} like \textsc{FedAVG} or FedPROX \macrocite{mcmahan2017communication, li2019fedprox} and allow clients to train heterogeneous model architectures \macrocite{li2019fedmd, chang2019cronus, li2021fedh2l}. FD methods can furthermore reduce communication overhead \macrocite{jeong2018distill, itahara2020distill, seo2020fd, sattler2020communication}, by exploiting the fact that distillation requires only the communication of model predictions instead of full models. In contrast to centralized distillation, where training and distillation data usually coincide, FD makes no restrictions on the auxiliary distillation data\footnote{Recent work even suggests that useful distillation data can be generated from the teacher models themselves \macrocite{nayak2019zero}.}, 
making it widely applicable.
% Our work aims to improve overall training performance in FL and is most closely related to the \textsc{FedDF} training protocol proposed in \macrocite{lin2020ensemble}, where distillation is combined with parameter averaging to achieve SOTA training performance. Building upon their work, we additionally leverage the distillation data set for unsupervised pre-training and weight the client predictions in the distillation step according to their prediction certainty to better cope with settings where the client’s data generating distributions are statistically heterogeneous. 
% Our work also bears similarity with FedBE \macrocite{chen2020feddistill}, which combines client predictions by means of a Bayesian model ensemble. While their proposed aggregation method achieves performance improvements over \textsc{FedAVG} in non-iid data settings, it does not support heterogeneous client models and cannot incorporate client prediction certainty. 
Our work, is in line with \macrocite{lin2020ensemble, chen2020feddistill} in that it aims to improve overall training performance in FL. Both \textsc{FedDF} \macrocite{lin2020ensemble} and \textsc{FedBE} \macrocite{chen2020feddistill} combine parameter averaging as done in FedAVG \macrocite{mcmahan2017communication} with ensemble distillation to improve FL performance. While \textsc{FedDF} combines client predictions by means of an (equally weighted) model ensemble, \textsc{FedBE} forms a Bayesian ensemble from the client models for better robustness to heterogeneous data. Taking \textsc{FedDF} as a starting point, we additionally leverage the auxiliary distillation data set for unsupervised pre-training and weight the client predictions in the distillation step according to their prediction certainty to better cope with settings where the client’s data generating distributions are statistically heterogeneous. 

% One line of research  exploits the fact that FD requires only the communication of model predictions instead of full models, to drastically reduce communication overhead. In doing so FD techniques also allow the clients to train different model architectures. See \macrocite{sattler2020communication} for a detailed comparison of FedAvg and FD and an analysis of the communication properties of the latter algorithmic framework. We also mention the related work by Guha~et~al.~\macrocite{guha2019one-shot}, which proposes a one-shot distillation method for convex models, where the server distills the locally optimized client models in a single round.

% \textbf{Unlabeled Data in FL:}
% Federated semi-supervised learning techniques \macrocite{zhang2020benchmarking, jeong2020federated} assume that clients hold both labeled and unlabeled private data from the same distribution. In contrast, we assume that the server has access to public unlabeled data that differs in distribution from the local client data. Federated self-supervised representation learning aims to train a feature extractor on private unlabeled client data \macrocite{zhang2020federated}. In contrast, we leverage self-supervised representation learning at the server to find a suitable model initialization.

\textbf{Weighted Ensembles:} 
Weighted ensemble methods were studied already in classical work \macrocite{hashem1993ensemble, perrone1993ensemble, opitz1999popular}, with certainty weighted ensembles of neural networks in particular being proposed for classification e.g. in \macrocite{jimenez1998dynamically}.
Mixture of experts and boosting methods \macrocite{yuksel2012twenty, masoudnia2014mixture, schapire1999brief} where multiple simple classifiers are combined by weighted averaging are frequently used in centralized settings.
% Domain adaptation theory \macrocite{mansour2008domain, hoffman2018domain} suggest that a distribution-weighted combination of the local hypotheses (teacher models), yields more robust domain adaptation. We address exactly this open question for federated ensemble distillation via privacy-preserving local distribution estimation. \todo{this also appears in sec. \ref{sec:weighted_distill}, remove here?}

A more detailed discussion of related work can be found in Appendix \ref{supp:related}.

\section{Federated Learning with Auxiliary Data}
\label{sec:method}

\begin{figure*}[t!]
    \centering
    \includegraphics[width=1.0\textwidth]{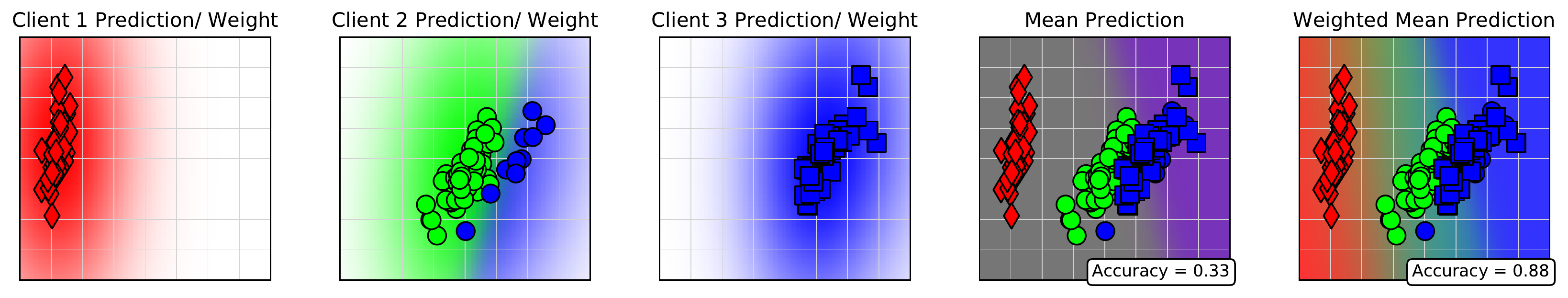}
    \vspace{-0.8cm}
    \caption{\textbf{Weighted Ensemble Distillation} illustrated in a toy example on the Iris data set (data points are projected to their two principal components). Three Federated Learning clients hold disjoint non-iid subsets of the training data. Panels 1-3: Predictions made by linear classifiers trained on the data of each client. Labels and predictions are color-coded, client certainty (measured via Gaussian KDE) is visualized via the alpha-channel. The mean of client predictions (panel 4) only poorly captures the distribution of training data. In contrast, the certainty-weighted mean of client predictions (panel 5) achieves much higher accuracy.}
    \label{fig:toy}
\end{figure*}

%\begin{algorithm}[t!]
%\caption{}\label{alg:FED}
%\For{$t=1,..,T$}{
%\For{$i \in \mathcal{S}_t\subseteq \{1,..,N\}$ \textbf{in parallel}}{
%\underline{Client $C_i$ does:}\\
%\textbullet~ $\theta_i\leftarrow \text{train}(\theta_0\leftarrow\theta, X_i, Y_i)$\hfill \# Local Training\\
%\textbullet~ $Y^{aux}_i\leftarrow f_{\theta_i}(X^{aux})$ \hfill\# Prediction\\
%\textbullet~ $s^{aux}_i\leftarrow f^s_{\theta_i}(X^{aux})$ \hfill \# Compute scores
%}
%\underline{Server $S$ does:}\\
%\textbullet~$\theta\leftarrow \sum_{i\in \mathcal{S}_t}\frac{|D_i|}{\sum_{l\in \mathcal{S}_t}|D_l|} \theta_i$\\
%\textbullet~$Y^{aux}\leftarrow\mathcal{A}_{mean}(Y^{aux}_i|i\in \mathcal{S}_t)$ \hfill \# Softlabel Aggregation\\
%\textbullet~$Y^{aux}\leftarrow\mathcal{A}_{weighted}(Y^{aux}_i, s^{aux}_i|i\in \mathcal{S}_t)$ \hfill \# Weighted Softlabel Aggregation\\
%\textbullet~ $\theta\leftarrow \text{train}(\theta_0\leftarrow\theta, X^{aux}, Y^{aux})$\hfill \# Distillation\\
%}
%\end{algorithm}

In this section, we describe our method for efficient Federated Learning in the presence of unlabeled auxiliary data (\textsc{FedAUX}). An illustration of our proposed approach is given in Figure \ref{fig:overview}. We describe \textsc{FedAUX} for the homogeneous setting were all clients hold the same model prototype. The detailed algorithm for the more general model-heterogeneous setting can be found in Appendix \ref{supp:algorithm}. An exhaustive \emph{qualitative} comparison between \textsc{FedAUX} and baseline methods is given in Appendix \ref{supp:qualitative}. 

\subsection{Problem Setting} We assume the conventional FL setting where a population of $n$ clients is holding potentially non-iid subsets of private labeled data $D_1,..,D_n$, from a training data distribution $(\bigcup_{i\leq n}D_i)\sim \varphi(\mathcal{X}, \mathcal{Y})$. We further make the assumption that the server and the clients both have access to a public collection of unlabeled auxiliary data from a deviating distribution $D_{aux}\sim \psi(\mathcal{X})$. The latter assumption is common to all studies on FD. 

One round of federated training is then performed as follows: A subset $\mathcal{S}_t$ of the client population is selected by the server and downloads a model initialization. Starting from this model initialization, each client then proceeds to train a model $f_i$ on it's local private data $D_i$ by taking multiple steps of stochastic gradient descent. We assume that these local models can be decomposed into a feature extractor $h_i$ and a classification head $g_i$ according to $f_i=g_i \circ h_i$%\footnote{For better readability, in this study we synonymize models with their parameters.}
. Finally, the updated models $f_i$, $i\in\mathcal{S}_t$ are sent back to the server, where they are aggregated to form a new server model $f$, which is used as the initialization point for the next round of FL. The goal of FL is to obtain a server model $f$, which optimally generalizes to new samples from the training data distribution $\varphi$, within a minimum number of communication rounds $t\leq T$.
%Every client is training a model $f_i : \mathcal{X}, \Theta \rightarrow \mathcal{Y}$, which .

\subsection{Federated Ensemble Distillation}
Federated Ensemble Distillation is a novel method for aggregating the knowledge of FL clients. Instead of aggregating the parameters of the client models (e.g. via an averaging operation), a student model is trained on the combined predictions of the clients on some public auxiliary data. Let $x\in D_{aux}$ be a batch of data from the auxiliary distillation data set. Then one iteration of student distillation is performed as
\begin{align}
\label{eq:distill_update}
    \theta^{t, j+1} \leftarrow \theta^{t, j} - \eta\frac{\partial D_{KL}(\mathcal{A}(\{f_i(x)|i\in \mathcal{S}_t\}), \sigma(f(x, \theta^{t, j})))}{\partial \theta^{t, j}}.
\end{align}
Hereby, $D_{KL}$ denotes the Kullback-Leibler divergence, $\eta>0$ is the learning rate, $\sigma$ is the softmax-function and $\mathcal{A}$ is a mechanism to aggregate the soft-labels. Existing work \macrocite{lin2020ensemble} aggregates the client predictions by taking the mean according to
\begin{align}
\label{eq:agg_mean}
\mathcal{A}_{mean}(\{f_i(x)|i\in \mathcal{S}_t\}) =  \sigma\left(\frac{\sum_{i\in \mathcal{S}_t} f_i(x)}{|\mathcal{S}_t|}\right).
\end{align}
Federated Ensemble Distillation is shown to outperform parameter averaging based techniques \macrocite{lin2020ensemble}.

\subsection{Self-supervised Pre-training}
\label{sec:pretrain}

Self-supervised representation learning can leverage large records of unlabeled data to create models which extract meaningful features. For the two types of data considered in this study - image and sequence data - strong self-supervised training algorithms are known in the form of contrastive representation learning \macrocite{chen2020simple, wang2020understanding} and next-token prediction \macrocite{devlin2018bert, radford2019language}. 
As part of the \textsc{FedAUX} preparation phase (cf. Fig. \ref{fig:overview}, P1) we propose to perform self-supervised training on the auxiliary data $D_{aux}$ at the server. We emphasize that this step makes no assumptions on the similarity between the local training data and the auxiliary data. This results in a parametrization for the feature extractor $h_0$. Since the training is performed at the server, using publicly available data, this step inflicts neither computational overhead nor privacy loss on the resource-constrained clients.

\subsection{Weighted Ensemble Distillation}
\label{sec:weighted_distill}
Different studies have shown that both the training speed, stability and maximum achievable accuracy in existing FL algorithms deteriorate if the training data is distributed in a heterogeneous "non-iid" way among the clients \macrocite{zhao2018federated, sattler2019robust, li2020covergence}. Federated Ensemble Distillation makes no exception to this rule \macrocite{lin2020ensemble}.

The underlying problem of combining hypotheses derived from different source domains has been explored in multiple-source domain adaptation theory \macrocite{mansour2008domain, hoffman2018domain}, which shows that standard convex combinations of the hypotheses of the clients as done in \macrocite{lin2020ensemble} may perform poorly on the target domain. Instead, a distribution-weighted combination of the local hypotheses is shown to be robust \macrocite{mansour2008domain, hoffman2018domain}. A simple toy example, displayed in Figure~\ref{fig:toy}, further illustrates this point.

%For a better intuition of this problem, let us consider the toy example illustrated in Figure \ref{fig:toy}. Displayed as scatter points are elements of the Iris data set, projected to their two main PCA components. The training data is distributed among three clients in a non-iid fashion, the label of each data point is indicated by the marker color. Overlayed in the background are the predictions of linear classifier  models that were trained on the local data of each client. As we can see, the models which were trained on the data of clients 1 and 3, uniformly predict that all inputs belong to the "red" and "blue" class respectively. The predictive power of these models and consequently their value as teachers for model distillation is thus very limited. This is also visualized in panel 4, where the mean prediction of the teacher models is displayed. We can however improve the teacher ensemble quite significantly, if we weight each teachers predictions at every location $x$ by it's certainty $s(x)$, illustrated via the alpha channel in panels 1-3. In the given example we approximate the value of $s(x)$ via Gaussian KDE. 

Inspired by these results, we propose to modify the aggregation rule of FD \eqref{eq:agg_mean} to a certainty-weighted average:
\begin{align}
\label{eq:weighted}
\mathcal{A}_{s}(\{(f_i(x), s_i(x))|i\in \mathcal{S}_t\}) = \sigma\left(\frac{\sum_{i\in \mathcal{S}_t} s_i(x)f_i(x)}{\sum_{i\in \mathcal{S}_t} s_i(x)}\right)
\end{align}
The question remains, how to calculate the certainty scores $s_i(x)$ in a privacy preserving way and for arbitrary high-dimensional data, where simple methods, such as Gaussian KDE used in our toy example, fall victim to the curse of dimensionality.  To this end, we propose the following methodology:

We split the available auxiliary data randomly into two disjoint subsets, 
$
    D^-~\cup~ D_{distill}= D_{aux},
$
the "negative" data and the "distillation" data.  Using the pre-trained model $h_0$ ($\rightarrow$ sec. \ref{sec:pretrain}) as a feature extractor, on each client, we then train a logistic regression classifier to separate the local data $D_i$ from the negatives $D^-$, by optimizing the following regularized empirical risk minimization problem
\begin{align}
\label{eq:ERM}
    w_i^* = \arg\min_{w} J(w, h_0, D_i, D^-)
\end{align}
with 
\begin{align}
\begin{split}
    J(w, h_0, D_i, D^-) = &a\sum_{x\in D_i \cup D^-}l(t_x\langle w, \tilde{h}_0(x)\rangle)+\lambda R(w).
\end{split}
\end{align}
Hereby $t_x=2(\mathbb{1}_{x\in D_i})-1\in[-1,1]$ defines the binary labels of the separation task, $a=(|D_i|+|D^-|)^{-1}$ is a normalizing factor and $\tilde{h}_0(x)=h_0(x)(\max_{x\in D_i\cup D^-}\|h_0(x)\|)^{-1}$ are the normalized features. We choose  $l(z) = \log(1+\exp(z))$ to be the logistic loss and $R(w) = \frac{1}{2}\|w\|^2_2$ to be the $\ell_2$-regularizer. Since $J$ is $\lambda$-strongly convex in $w$, problem \eqref{eq:ERM} is uniquely solvable. This step is performed only once on every client, during the preparation phase (cf. Fig.~\ref{fig:overview}, P2) and the computational overhead for the clients of solving \eqref{eq:ERM} is negligible in comparison to the cost of multiple rounds of training the (deep) model $f_i$.

Given the solution of the regularized ERM $w_i^*$, the certainty scores on the distillation data $D_{distill}$ can be obtained via 
\begin{align}
s_i(x)=(1+\exp(-\langle w_i^*, \tilde{h}_0(x)\rangle))^{-1}+\xi.
\end{align}
A small additive $\xi>0$ ensures numerical stability when taking the weighted mean in \eqref{eq:weighted} (we set $\xi=1e-8$). In Appendix \ref{supp:domain_adaptation}, we provide further empirical results, suggesting that our certainty-weighted averaging method \eqref{eq:weighted} approximates a robust aggregation rule proposed in \macrocite{mansour2008domain}.%\todo{change formulation? R: I like it.}

% \todo{ Link to Appendix, where we analyze how well the certainty scores approximate simple distribution-weights suggested by multiple-source domain adaptation theory \macrocite{mansour2008domain, hoffman2018domain}. That is, we show for an illustrating example of a Gaussian mixture that $s_i(x) / \sum_{j} s_j(x) \approx D_i(x) / \sum_j D_j(x)$ for $x \in \mathcal{X}$, where $D_i$ is the local data generating distribution of client $i$. }

\subsection{Privacy Analysis}
\label{sec:privacy}
Sharing the certainty scores $\{s_i(x)|x\in D_{distill}\}$ with the central server intuitively causes privacy loss for the clients. After all, a high score $s_i(x)$ indicates, that the public data point $x\in D_{distill}$ is similar to the private data $D_i$ of client $i$ (in the sense of \eqref{eq:ERM}). To protect the privacy of the clients, quantify and limit the privacy loss, we propose to use data-level differential privacy (cf. Fig. \ref{fig:overview}, P3). Following the classic definition of \macrocite{dwork2014algorithmic}, a randomized mechanism is called differentially private, if it's output on any input data base $d$ is indistinguishable from output on  any neighboring database $d'$ which differs from $d$ in one element.

\begin{definition}
A randomized mechanism $\mathcal{M} : \mathcal{D} \rightarrow{\mathcal{R}}$ satisfies $(\varepsilon,\delta)$-differential privacy if for any two adjacent inputs $d$ and $d'$ that differ in only one element and for any subset of outputs $S\subseteq \mathcal{R}$, it holds that
\begin{align}
    P[\mathcal{M}(d)\in S]\leq \exp(\varepsilon)P[\mathcal{M}(d')\in S]+\delta.
\end{align}
\end{definition}

Differential privacy of a mechanism $\mathcal{M}$ can be achieved, by limiting it's sensitivity 
\begin{align}
\Delta(\mathcal{M}) = \max_{d_1,d_2\in\mathcal{D}}\|\mathcal{M}(d_1)-\mathcal{M}(d_2)\|
\end{align}
 and then applying a randomized noise mechanism. We adapt a Theorem from \macrocite{chaudhuri2011differentially} to establish the sensitivity of \eqref{eq:ERM}: 

\begin{theorem}
\label{theo:1}
If $R(\cdot)$ is differentiable and 1-strongly convex and $l$ is differentiable with $|l'(z)|\leq 1$ $\forall z$, then the $\ell^2$-sensitivity $\Delta_2(\mathcal{M})$ of the mechanism 
\begin{align}
    \mathcal{M} : D_i \mapsto \arg\min_{w} J(f, h_0, D_i, D^-)
\end{align}
is at most $2(\lambda(|D_i|+|D^-|))^{-1}$.
\end{theorem}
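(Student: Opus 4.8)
The plan is to adapt the perturbation argument of \macrocite{chaudhuri2011differentially}: first bound the distance between the two minimizers produced on adjacent input databases by the supremum over $w$ of the gradient of the \emph{difference} of the two objectives, and then estimate that gradient directly using $|l'(z)|\le 1$ and the boundedness of the normalized features.

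Concretely, I would fix two adjacent private datasets $D_i$ and $D_i'$ that differ in a single sample, and set $w^\star=\arg\min_w J(w,h_0,D_i,D^-)$ and $w'^\star=\arg\min_w J(w,h_0,D_i',D^-)$. Each objective is a convex data term plus $\lambda R$: the map $w\mapsto l(t_x\langle w,\tilde{h}_0(x)\rangle)$ is convex as a composition of the convex logistic loss with an affine map, and $\lambda R$ is $\lambda$-strongly convex since $R$ is $1$-strongly convex; hence both objectives are $\lambda$-strongly convex and their difference is differentiable. The perturbation lemma of \macrocite{chaudhuri2011differentially} then yields
\begin{align*}
\|w^\star-w'^\star\|\ \le\ \frac{1}{\lambda}\,\sup_{w}\big\|\nabla_w\big[J(w,h_0,D_i,D^-)-J(w,h_0,D_i',D^-)\big]\big\|.
\end{align*}

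For the right-hand side, I would note that in the difference of the two objectives the term $\lambda R(w)$ cancels, and so do all loss terms coming from $D^-$ and from the samples shared by $D_i$ and $D_i'$; what remains is $a$ times the logistic loss of the single sample $x$ that lies only in $D_i$ minus $a$ times that of the single sample $x'$ that lies only in $D_i'$ (both carrying label $t=+1$), where $a=(|D_i|+|D^-|)^{-1}$. Differentiating, applying the triangle inequality, and using $|l'(z)|\le 1$ together with $\|\tilde{h}_0(\cdot)\|\le 1$ (immediate from the definition of $\tilde{h}_0$), the gradient norm is at most $a+a=2a$. Substituting into the perturbation bound gives $\|w^\star-w'^\star\|\le 2a/\lambda = 2\big(\lambda(|D_i|+|D^-|)\big)^{-1}$, and since this holds uniformly over adjacent pairs it is the asserted $\ell^2$-sensitivity.

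The step I expect to require the most care is invoking the perturbation lemma, which needs both objectives to be strongly convex on the whole parameter space (true here because the logistic loss is convex and $R$ is strongly convex) and needs the shared samples to cancel exactly in the difference; the latter forces the normalizing constant $a$ to be common to the two objectives — fine under the ``differ in one element'' notion of adjacency, where $|D_i|=|D_i'|$ — and is cleanest if $\tilde{h}_0$ is normalized by a data-independent bound on $\|h_0(\cdot)\|$, so that the feature map entering the loss is literally the same for $D_i$ and $D_i'$. Everything else is routine: the cancellation is bookkeeping, and the gradient estimate is a one-line application of the triangle inequality.
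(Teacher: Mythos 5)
Your proposal is correct and follows essentially the same route as the paper: both adapt the argument of \macrocite{chaudhuri2011differentially}, bounding the gradient of the difference objective $m(w)=J(w,h_0,D_i,D^-)-J(w,h_0,D_i',D^-)$ by $2a$ using $|l'(z)|\le 1$ and $\|\tilde{h}_0(x)\|\le 1$, and then converting this into $\|w^*-v^*\|\le 2a/\lambda$ via $\lambda$-strong convexity. The only difference is that you invoke the perturbation lemma as a black box while the paper re-derives it inline (strong convexity plus Cauchy--Schwartz and the first-order optimality conditions); your caveat about the data-dependent normalization in $\tilde{h}_0$ is a fair observation that applies equally to the paper's own proof.
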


The proof can be found in Appendix \ref{supp:proof}. As we can see the sensitivity scales inversely with the size of the total data $|D_i|+|D^-|$. From Theorem \ref{theo:1} and application of the Gaussian mechanism \macrocite{dwork2014algorithmic} it follows that the randomized mechanism
\begin{align}
\label{eq:san}
    \mathcal{M}_{san} : D_i \mapsto \arg\min_{f} J(f, h_0, D_i, D^-)+N
\end{align}
with $N\sim\mathcal{N}(\mathbf{0}, I\sigma^2)$  and  $\sigma^2=\frac{8\ln(1.25\delta^{-1})}{\varepsilon^2\lambda^2(|D_i|+|D_{aux}|)^2}$ is $(\varepsilon, \delta)$-differentially private.

The post-processing property of DP ensures that the release of any number of scores computed using the output of mechanism $\mathcal{M}_{san}$ is still $(\varepsilon,\delta)$-private. Note, that in this work we restrict ourselves to the privacy analysis of the scoring mechanism. The differentially private training of deep classifiers $f_i$ is a challenge in it's own right and has been addressed e.g. in \macrocite{abadi2016deep}. Following the basic composition theorem \macrocite{dwork2014algorithmic}, the total privacy cost of running \textsc{FedAUX} is the sum of the privacy loss of the scoring mechanism $\mathcal{M}_{san}$ and the privacy loss of communicating the updated models $f_i$ (the latter is the same for all FL algorithms).

\section{Experiments}
\subsection{Setup}
\textbf{Datasets and Models:} We evaluate \textsc{FedAUX} and SOTA FL methods on both Federated image and text classification problems with large scale convolutional and transformer models respectively. For our image classification problems we train ResNet- \macrocite{he2016deep}, MobileNet- \macrocite{sandler2018mobilenetv2} and ShuffleNet- \macrocite{zhang2018shufflenet} type models on CIFAR-10 and CIFAR-100 and use STL-10, CIFAR-100 and SVHN as well as different subsets of ImageNet (Mammals, Birds, Dogs, Devices, Invertebrates, Structures)\footnote{The methodology for generating these subsets is described in Appendix \ref{supp:iamgenet_subsets}} as auxiliary data. In our experiments, we always use 80\% of the auxiliary data as distillation data $D_{distill}$ and 20\% as negative data $D^-$. For our text classification problems we train Tiny-Bert \macrocite{jiao2020tinybert} on the AG-NEWS \macrocite{Zhang2015CharacterlevelCN} and Multilingual Amazon Reviews Corpus \macrocite{marc_reviews} and use BookCorpus \macrocite{Zhu_2015_ICCV} as auxiliary data.

\begin{figure*}[t!]
    \centering
    \includegraphics[width=\textwidth]{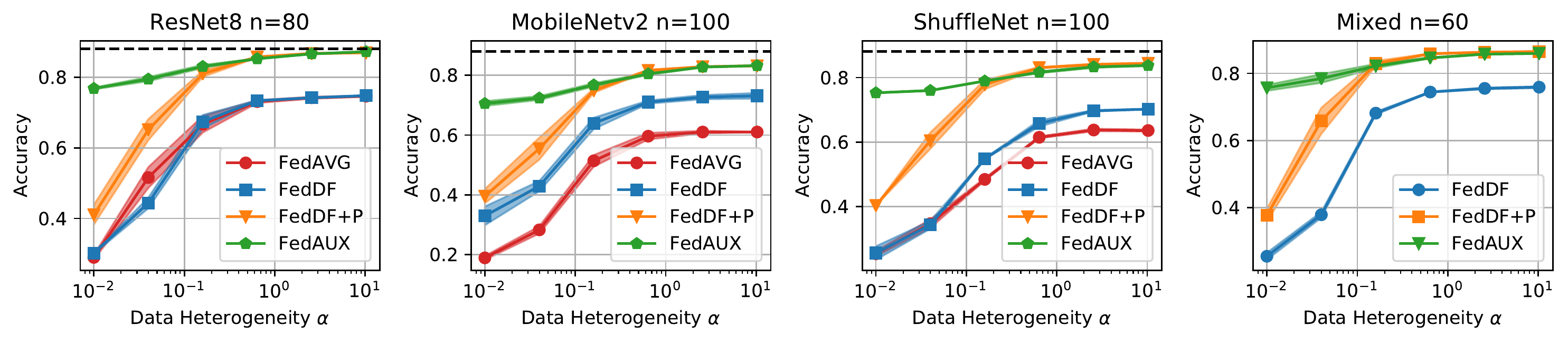}
    \vspace{-0.8cm}
    \caption{Evaluation on \textbf{different neural networks} and client population sizes $n$. Accuracy achieved after $T=100$ communication rounds by different Federated Distillation methods at different levels of data heterogeneity $\alpha$. STL-10 is used as auxiliary data set. In the "Mixed" setting one third of the client population each trains on ResNet8, MobileNetv2 and Shufflenet respectively. Black dashed line indicates centralized training performance.}
    \label{fig:summary_distillation}

    % \subfigure[Evaluating FedAUX on \textbf{NLP Benchmarks}. Performance of FedAUX for different combinations of local datasets and heterogenity levels $\alpha$. 10 clients training TinyBERT at $\alpha=0.01$ and $C=100\%$. Bookcorpus is used as auxiliary data set.]{\includegraphics[width=0.43\textwidth]{images/results_transformer2.pdf}}\hfill
    % \subfigure[\textbf{Privacy Analysis}. Performance of FedAUX for different combinations of the privacy parameters $\varepsilon$, $\delta$ and $\lambda$. 40 clients training Resnet-8 on CIFAR-10 at $\alpha=0.01$ and $C=40\%$. STL-10 is used as auxiliary data set.]{\includegraphics[width=0.53\textwidth]{images/dp_analysis.pdf}}

    \includegraphics[width=0.8\textwidth]{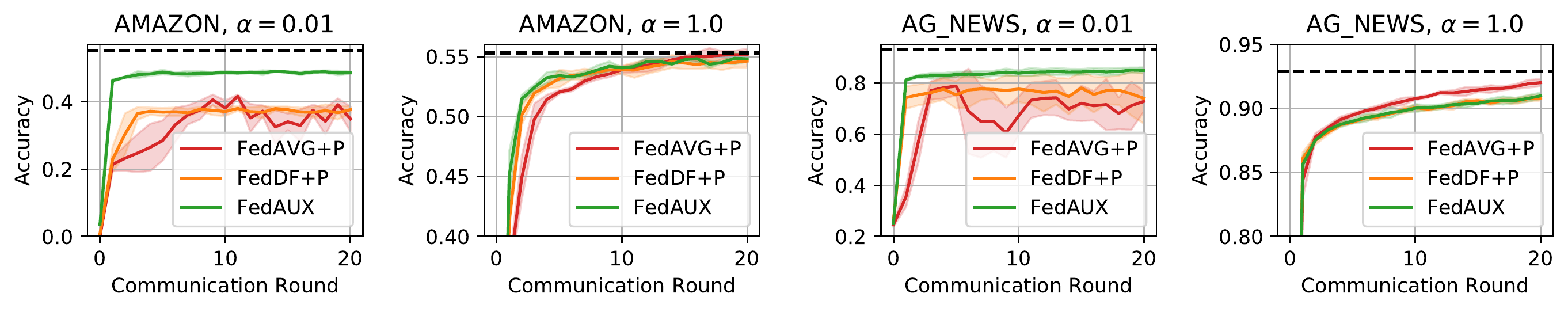}
    \vspace{-0.5cm}
    \caption{Evaluating \textsc{FedAUX} on \textbf{NLP Benchmarks}. Performance of \textsc{FedAUX} for different combinations of local datasets and heterogenity levels $\alpha$. 10 clients training TinyBERT at $\alpha=0.01$ and $C=100\%$. Bookcorpus is used as auxiliary data set. Black dashed line indicates centralized training performance. }
    \label{fig:transformer}
    
    \includegraphics[width=0.8\textwidth]{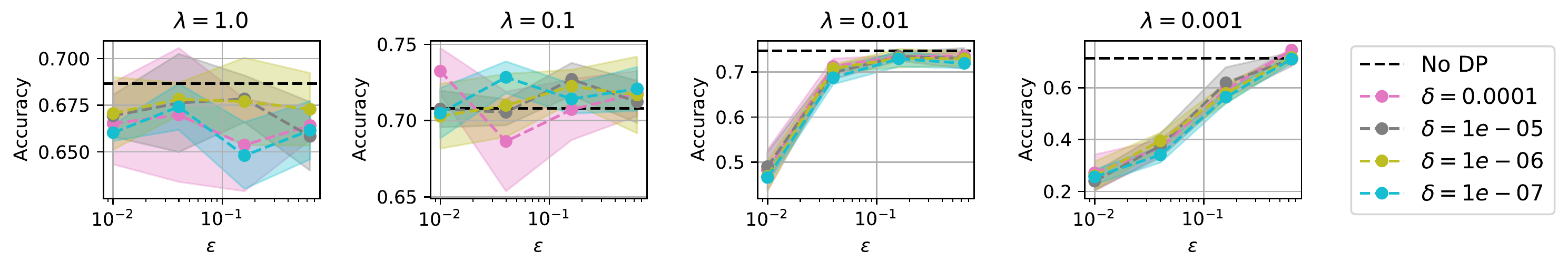}
    \vspace{-0.5cm}
    \caption{\textbf{Privacy Analysis}. Performance of \textsc{FedAUX} for different combinations of the privacy parameters $\varepsilon$, $\delta$ and $\lambda$. 40 clients training Resnet-8 for $T=10$ rounds on CIFAR-10 at $\alpha=0.01$ and $C=40\%$. STL-10 is used as auxiliary data set.}
    \label{fig:dp_analysis}
    
\end{figure*}
% \begin{figure*}[!h]
%     \centering
%     \begin{minipage}{.46\textwidth}
%         \centering
%         \includegraphics[width=\linewidth]{images/results_transformer2.pdf}
%         \vspace{-0.8cm}
%         \caption{$dt=0.1$}
%         \label{fig:prob1_6_2}
%     \end{minipage}%
%     \begin{minipage}{0.46\textwidth}
%         \center
%         \includegraphics[width=\linewidth]{images/dp_analysis.pdf}
%         \vspace{-0.8cm}
%         \caption{$dt =$}
%         \label{fig:prob1_6_1}
        
%         \includegraphics[width=\linewidth]{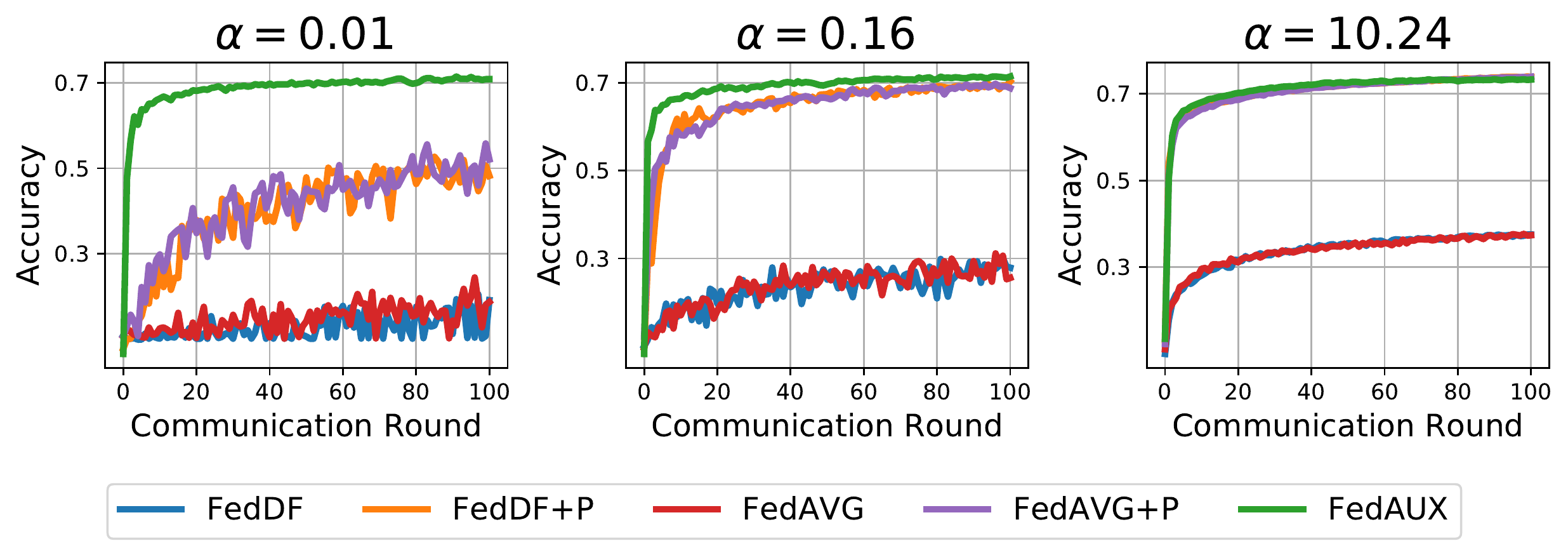}
%         \vspace{-0.8cm}
%         \caption{$dt =$}
%         \label{fig:prob1_6_1}
%     \end{minipage}
% \end{figure*}
\begin{table*}[t!]
    \centering
        \caption{Maximum accuracy achieved by \textsc{FedAUX} and other baseline FL methods after $T=100$ communication rounds, at \textbf{different participation rates $C$} and levels of data heterogeneity $\alpha$. 20 Clients training ResNet-8 on CIFAR-10. Auxiliary data used is STL10. $^*$Methods assume availability of auxiliary data. $^\dagger$Improved Baselines.}
    \label{tab:participation_rate}
    %\resizebox{0.8\textwidth}{!}{%
\begin{tabular}{llllllll}
\toprule
 &        \multicolumn{3}{c}{$\alpha=0.01$} &  & \multicolumn{3}{c}{$\alpha=100.0$} \\
 \cline{2-4}\cline{6-8}
Method & $C=0.2$ &        $C=0.4$ &        $C=0.8$ &        &       $C=0.2$ &       $C=0.4$ &       $C=0.8$ \\
\midrule
\textsc{FedAVG} \macrocite{mcmahan2017communication}    &   19.9$\pm$0.7 &   23.6$\pm$2.0 &   28.9$\pm$2.0 &                &  81.3$\pm$0.1 &  82.2$\pm$0.0 &  82.3$\pm$0.1 \\
\textsc{FedPROX} \macrocite{li2019fedprox}   &   28.4$\pm$2.5 &   34.0$\pm$1.9 &   42.0$\pm$1.0 &                &  81.4$\pm$0.1 &  82.3$\pm$0.2 &  82.0$\pm$0.3 \\
\textsc{FedDF}$^*$ \macrocite{lin2020ensemble}    &   25.0$\pm$0.8 &   27.8$\pm$0.8 &   30.6$\pm$0.3 &                &  80.8$\pm$0.1 &  81.4$\pm$0.3 &  81.5$\pm$0.3 \\
\textsc{FedBE}$^*$ \macrocite{chen2020feddistill}    &   20.9$\pm$0.6 &   25.7$\pm$1.4 &   29.1$\pm$0.1 &                &  81.4$\pm$0.7 &  82.0$\pm$0.1 &  82.2$\pm$0.2 \\
\textsc{FedAVG+P}$^*$$^\dagger$  &   30.4$\pm$7.9 &   32.1$\pm$2.0 &   38.4$\pm$0.5 &                &  89.0$\pm$0.1 &  \textbf{89.5$\pm$0.1} &  \textbf{89.6$\pm$0.1} \\
\textsc{FedPROX+P}$^*$$^\dagger$  &   42.8$\pm$2.7 &   43.1$\pm$0.2 &   49.0$\pm$0.7 &                &  88.9$\pm$0.0 &  89.1$\pm$0.1 &  89.4$\pm$0.0 \\
\textsc{FedDF+P}$^*$$^\dagger$    &   28.8$\pm$3.0 &   39.3$\pm$3.6 &   48.1$\pm$1.1 &                &  88.8$\pm$0.0 &  88.9$\pm$0.1 &  88.9$\pm$0.1 \\
\textsc{FedBE+P}$^*$$^\dagger$    &   30.2$\pm$2.2 &   29.8$\pm$0.8 &   37.7$\pm$0.0 &                &  \textbf{89.1$\pm$0.1} &  89.5$\pm$0.2 &  89.5$\pm$0.0 \\
\textsc{FedAUX}$^*$    &   \textbf{54.2$\pm$0.3} &   \textbf{71.2$\pm$2.1} &   \textbf{78.5$\pm$0.0} &                &  88.9$\pm$0.0 &  89.0$\pm$0.0 &  89.0$\pm$0.1 \\
\bottomrule
\end{tabular}
%}

    \caption{Maximum accuracy achieved by \textsc{FedAUX} and other baseline FL methods after 100 communication rounds, when \textbf{different sets of unlabeled auxiliary data} are used for pre-training and/ or distillation. 40 Clients training ResNet-8 on CIFAR-10 at $C=40\%$.}
    \label{tab:aux_data}
    %\resizebox{0.5\textwidth}{!}{%
\begin{tabular}{llrrrrrrrr}
\toprule
& & \multicolumn{8}{c}{Auxiliary Data}\\
\cline{3-10}
$\alpha$ & Method &  STL-10 &  CIFAR-100 &   SVHN &  Invertebr. &  Birds &  Devices &   Dogs &  Structures  \\
\midrule
0.01   & \textsc{FedDF} &           27.9$\pm$3.2 &           29.5$\pm$6.2 &           28.1$\pm$3.9 &           28.5$\pm$3.6 &           30.1$\pm$2.0 &           26.3$\pm$0.2 &           28.9$\pm$5.1 &           30.2$\pm$7.0 \\
       & \textsc{FedDF+P} &           43.0$\pm$5.2 &           41.6$\pm$1.1 &           29.6$\pm$3.4 &           38.8$\pm$6.5 &           41.4$\pm$5.9 &           35.9$\pm$4.9 &           41.1$\pm$7.3 &           36.7$\pm$7.1 \\
       & \textsc{FedAUX} &  \textbf{76.8$\pm$0.9} &  \textbf{71.5$\pm$2.5} &  \textbf{43.7$\pm$1.5} &  \textbf{68.2$\pm$0.7} &  \textbf{65.7$\pm$3.1} &  \textbf{71.5$\pm$0.1} &  \textbf{71.8$\pm$3.8} &  \textbf{64.1$\pm$3.3} \\
       \midrule
100.00 & \textsc{FedDF} &           79.3$\pm$0.7 &           79.9$\pm$0.1 &           80.9$\pm$0.1 &           80.2$\pm$0.1 &           80.2$\pm$0.4 &           79.4$\pm$0.3 &           79.7$\pm$0.4 &           80.1$\pm$0.2 \\
       & \textsc{FedDF+P} &           88.3$\pm$0.0 &           86.7$\pm$0.0 &  \textbf{81.7$\pm$0.2} &           87.4$\pm$0.1 &           87.6$\pm$0.0 &           87.7$\pm$0.1 &           88.4$\pm$0.0 &  \textbf{87.4$\pm$0.1} \\
       & \textsc{FedAUX} &  \textbf{88.5$\pm$0.0} &  \textbf{86.7$\pm$0.1} &           81.6$\pm$0.0 &  \textbf{87.8$\pm$0.1} &  \textbf{87.8$\pm$0.1} &  \textbf{87.8$\pm$0.0} &  \textbf{88.6$\pm$0.0} &           87.3$\pm$0.1 \\
\bottomrule
\end{tabular}

    \caption{\textbf{One-shot performance} of different FL methods. Maximum accuracy achieved after $T=1$ communication rounds at participation-rate $C=100\%$. Each client trains for $E=40$ local epochs.}
    \label{tab:one-shot}
\begin{tabular}{lllllcllll}
\toprule
 & \multicolumn{4}{c}{MobileNetv2, $n=100$} & & \multicolumn{4}{c}{Shufflenet, $n=100$} \\
\cline{2-5}
\cline{7-9}
Method &            $\alpha=0.01$ &   $\alpha=0.04$     &    $\alpha=0.16$ &           $\alpha=10.24$ &      &      $\alpha=0.01$ &   $\alpha=0.04$     &          $\alpha=0.16$ &           $\alpha=10.24$ \\
\midrule
\textsc{FedAVG}    &           10.3$\pm$0.0 &           13.6$\pm$2.3 &           23.6$\pm$0.0 &           30.5$\pm$0.9 &            &           12.1$\pm$0.8 &           17.4$\pm$0.4 &           28.2$\pm$0.8 &           37.8$\pm$0.7 \\
\textsc{FedPROX}   &  11.6$\pm$0.8 &  14.3$\pm$1.4 &  23.7$\pm$0.3 &  30.5$\pm$0.5 &   &  12.9$\pm$1.7 &  18.9$\pm$0.2 &  29.4$\pm$0.3 &  38.9$\pm$0.5 \\
\textsc{FedDF}     &           16.8$\pm$4.2 &           29.5$\pm$3.8 &           37.7$\pm$1.1 &           40.4$\pm$0.5 &            &           16.0$\pm$5.1 &           27.3$\pm$0.1 &           38.7$\pm$0.2 &           45.5$\pm$0.5 \\
\textsc{FedAVG+P}  &           24.3$\pm$1.1 &           44.0$\pm$4.4 &           57.6$\pm$3.7 &           69.9$\pm$0.0 &            &           25.5$\pm$1.4 &           44.2$\pm$0.1 &           62.9$\pm$1.6 &           71.9$\pm$0.1 \\
\textsc{FedPROX+P} &  27.2$\pm$2.2 &  43.4$\pm$3.6 &  56.9$\pm$3.9 &  70.0$\pm$0.1 &   &  28.4$\pm$0.2 &  47.1$\pm$1.5 &  63.3$\pm$1.2 &  71.9$\pm$0.1 \\
\textsc{FedDF+P}   &           46.7$\pm$5.6 &           61.1$\pm$1.3 &           67.6$\pm$0.5 &           71.2$\pm$0.1 &            &           40.4$\pm$2.7 &           59.4$\pm$0.8 &           68.8$\pm$0.2 &           72.7$\pm$0.0 \\
\textsc{FedAUX}    &  \textbf{64.8$\pm$0.0} &  \textbf{65.5$\pm$1.0} &  \textbf{68.2$\pm$0.2} &  \textbf{71.3$\pm$0.1} &            &  \textbf{66.9$\pm$0.6} &  \textbf{68.6$\pm$0.4} &  \textbf{70.8$\pm$0.3} &  \textbf{72.9$\pm$0.1} \\
\bottomrule
\end{tabular}
\end{table*}
\textbf{Federated Learning environment and Data Partitioning}: We consider Federated Learning problems with up to $n=100$ participating clients. In all experiments, we split the training data evenly among the clients according to a dirichlet distribution following the procedure outlined in \macrocite{hsu2019measuring} and illustrated in Fig. \ref{fig:alpha}. This allows us to smoothly adapt the level of non-iid-ness in the client data using the dirichlet parameter $\alpha$. We experiment with values for $\alpha$ varying between 100.0 and 0.01. A value of $\alpha=100.0$ results in an almost identical label distribution, while setting $\alpha=0.01$ results in a split, where the vast majority of data on every client stems from one single class. See Appendix \ref{supp:data_splitting} for a more detailed description of our data splitting procedure. We vary the client participation rate $C$ in every round between 20\% and 100\%. 
%The validation data in each case is following the distribution of the client training data, as is standard convention in FL.\\

\textbf{Pre-training strategy:} For our image classification problems, we use contrastive representation learning as described in \macrocite{chen2020simple} for pre-training. We use the default set of data augmentations proposed in the paper and train with the Adam optimizer, learning rate set to $10^{-3}$ and a batch-size of 512. For our text classification problems, we pre-train using self-supervised next-word prediction. 

\textbf{Training the Scoring model and Privacy Setting:} We set the default privacy parameters to $\lambda=0.1$, $\varepsilon=0.1$ and $\delta=1e-5$ respectively and solve \eqref{eq:ERM} by running L-BFGS \macrocite{liu1989limited} until convergence ($\leq 1000$ steps). 

\textbf{Baselines:}  We compare the performance of \textsc{FedAUX} to state-of-the-art FL methods: \textsc{FedAVG} \macrocite{mcmahan2017communication}, \textsc{FedProx} \macrocite{li2019fedprox}, Federated Ensemble Distillation (\textsc{FedDF}) \macrocite{lin2020ensemble} and \textsc{FedBE} \macrocite{chen2020feddistill}. To clearly discern the performance benefits of the two components of \textsc{FedAUX} (unsupervised pre-training and weighted ensemble distillation), we also report performance metrics on versions of these methods where the auxiliary data was used to pre-train the feature extractor $h$ ("\textsc{FedAVG+P}", "\textsc{FedProx+P}", "\textsc{FedDF+P}"  resp. "\textsc{FedBE+P}"). For \textsc{FedBE} we set the sample size to 10 as suggested in the paper. For \textsc{FedProx} we always tune the proximal parameter $\mu$.

\textbf{Optimization:} On all image classification task, we use the very popular Adam optimizer \macrocite{kingma2014adam}, with a fixed learning rate of $\eta=10^{-3}$ and a batch-size of 32 for local training. Distillation is performed for one epoch for all methods using Adam at a batch-size of 128 and fixed learning rate of $5e-5$. More detailed hyperparameter analysis in Appendix \ref{supp:hyperparameter} shows that this choice of optimization parameters is approximately optimal for all of the methods. If not stated otherwise, the number of local epochs $E$  is set to 1.

\subsection{Evaluating \textsc{FedAUX} on common Federated Learning Benchmarks}
We start out by evaluating the performance of \textsc{FedAUX} on classic benchmarks for Federated image classification. Figure \ref{fig:summary_distillation} shows the maximum accuracy achieved by different Federated Distillation methods after $T=100$ communication rounds at different levels of data heterogeneity. As we can see, \textsc{FedAUX} distinctively outperforms \textsc{FedDF} on the entire range of data heterogeneity levels $\alpha$ on all benchmarks. For instance, when training ResNet8 with $n=80$ clients at $\alpha=0.01$, \textsc{FedAUX} \emph{raises the maximum achieved accuracy from 18.2\% to 78.1\%} (under the same set of assumptions).  The two components of \textsc{FedAUX}, unsupervised pre-training and weighted ensemble distillation, both contribute independently to the performance improvement, as can be seen when comparing with \textsc{FedDF+P}, which only uses unsupervised pre-training. Weighted ensemble distillation as done in \textsc{FedAUX} leads to greater or equal performance than equally weighted distillation (\textsc{FedDF+P}) across all levels of data heterogeneity. The same overall picture can be observed in the "Mixed" setting where clients train different model architectures. Detailed training curves are given in the Appendix \ref{supp:training_curves}.

Table \ref{tab:participation_rate} compares the performance of \textsc{FedAUX} and baseline methods at different client participation rates $C$. We can see that \textsc{FedAUX} benefits from higher participation rates. In all scenarios, methods which are initialized using the pre-trained feature-extractor $h_0$ distinctively outperform their randomly initialized counterparts. In the iid setting at $\alpha=100.0$ \textsc{FedAUX} is mostly en par with the (improved) parameter averaging based methods \textsc{FedAVG+P} and \textsc{FedPROX+P}, with a maximum performance gap of 0.8\%. At $\alpha=0.01$ on the other hand \textsc{FedAUX} outperforms all other methods with a margin of up to 29\%.

\subsection{Evaluating \textsc{FedAUX} on NLP Benchmarks} 
%Analyzing \textsc{FedAUX} in the NLP domain is particulary interesting since transformer-based architectures, which we will use here, are getting increasingly common \macrocite{devlin2018bert,radford2019language,brown2020language,yang2020xlnet} and NLP as a field deals with sparse data and therefore higher heterogeneity in local data sets only increases potential detrimental effects \todo{find proof for this claim}. 
%For our NLP experiments we finetune TinyBERT \macrocite{jiao2020tinybert}, which achieves comparable accuracy on different NLP benchmarks to contending models such as DistilBERT \macrocite{sanh2020distilbert} while being seven times smaller than its teacher model BERT \macrocite{devlin2018bert}.
Figure \ref{fig:transformer} shows learning curves for Federated training of TinyBERT on the Amazon and AG-News datasets at two different levels of data heterogeneity $\alpha$. We observe, that \textsc{FedAUX} significantly outperforms \textsc{FedDF+P} as well as \textsc{FedAVG+P} in the heterogeneous setting ($\alpha=0.01$) and reaches 95\% of its final accuracy after one communication round on both datasets, indicating suitability for one-shot learning. On more homogeneous data ($\alpha=1.0$) \textsc{FedAUX} performs mostly en par with pre-trained versions of \textsc{FedAVG} and \textsc{FedDF}, with a maximal performance gap of 1.1 \% accuracy on the test set.
We note, that effects of data heterogeneity are less severe as in this setting as both the AG News and the Amazon data set only have four and five labels respectively and an $\alpha$ of $1.0$ already leads to a distribution where each clients owns a subset of the private data set containing all possible labels. Further details on our implementation can be found the Appendix \ref{supp:transformer_implementation}.

\subsection{Privacy Analysis of \textsc{FedAUX}}
\label{sec:ex_privacy}
Figure \ref{fig:dp_analysis} examines the dependence of \textsc{FedAUX}' training performance of the privacy parameters $\varepsilon$, $\delta$ and the regularization parameter $\lambda$. As we can see, performance comparable to non-private scoring is achievable at conservative privacy parameters $\varepsilon$, $\delta$. For instance, at $\lambda=0.01$ setting $\varepsilon=0.04$ and $\delta=10^{-6}$ reduces the accuracy from 74.6\% to 70.8\%. At higher values of $\lambda$, better privacy guarantees have an even less harmful effect, at the cost however of an overall degradation in performance. Throughout this empirical study, we have set the default privacy parameters to $\lambda=0.1$, $\varepsilon=0.1$ and $\delta=1e-5$. 
We also perform an empirical privacy analysis in the Appendix \ref{supp:epirical_privacy}, which provides additional intuitive understanding and confidence in the privacy properties of our method.

\subsection{Evaluating the dependence on Auxiliary Data}
\label{sec:auxiliary_data}
Next, we investigate the influence of the auxiliary data set $D_{aux}$ on unsupervised pretraining, distillation and weighted distillation respectively. We use CIFAR-10 as training data set and consider 8 different auxiliary data sets, which differ w.r.t their similarity to this client training data - from more similar (STL-10, CIFAR-100) to less similar (Devices, SVHN)\footnote{The CIFAR-10 data set contains images from the classes airplane, automobile, bird, cat, deer, dog, frog, horse, ship and truc.}. 
Table \ref{tab:aux_data} shows the maximum achieved accuracy after $T=100$ rounds when each of these data sets is used as auxiliary data. As we can see, performance \emph{always} improves when auxiliary data is used for unsupervised pre-training. Even for the highly dissimilar SVHN data set (which contains images of house numbers) performance of \textsc{FedDF+P} improves by 1\% over \textsc{FedDF} in both the iid and non-iid regime.  For other data sets like Dogs, Birds or Invertebrates performance improves by up to 14\%, although they overlap with only one single class of the CIFAR-10 data set. The outperformance of \textsc{FedAUX} on such a wide variety of highly dissimilar data sets suggest that beneficial auxiliary data should be available in the majority of practical FL problems and also has positive implications from the perspective of privacy.  Interestingly, performance of \textsc{FedDF} seems to only weakly correlate with the performance of \textsc{FedDF+P} and \textsc{FedAUX} as a function of the auxiliary data set. This suggests, that the properties, which make a data set useful for distillation are not the same ones that make it useful for pre-training and weighted distillation. Investigating this relationship further is an interesting direction of future research.

\begin{figure}[t!]
    \centering
    \includegraphics[width=0.5\textwidth]{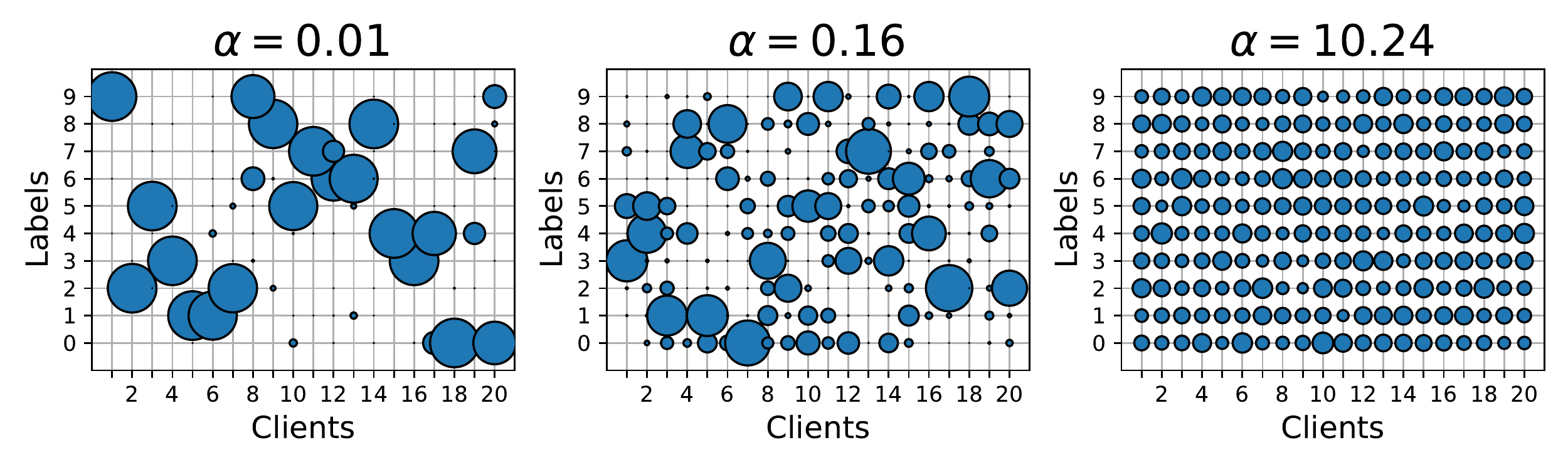}
    \vspace{-0.8cm}
    \caption{Illustration of the \textbf{Dirichlet data splitting strategy} we use throughout the paper, exemplary for a Federated Learning setting with 20 Clients and 10 different classes. Marker size indicates the number of samples held by one client for each particular class. Lower values of $\alpha$ lead to more heterogeneous distributions of client data. Figure adapted from \macrocite{lin2020ensemble}.}
    \label{fig:alpha}
    
    \includegraphics[width=0.5\textwidth]{images/linear_resnet2.pdf}
    \vspace{-0.8cm}
    \caption{\textbf{Linear evaluation}. Training curves for different Federated Learning methods at different levels of data heterogeneity $\alpha$ when only the classification head $g$ is updated in the training phase. A total of $n=80$ clients training ResNet8 on CIFAR-10 at $C=40\%$, using STL-10 as auxiliary data set.}
    \label{fig:linear}
\end{figure}

\subsection{\textsc{FedAUX} in hardware-constrained settings}
\textbf{Linear Evaluation:}
In settings where the FL clients are hardware-constrained mobile or IoT devices, local training of entire deep neural networks like ResNet8 might be infeasible. We therefore also consider the evaluation of different FL methods, when only the linear classification head $g$ is updated during the training phase. Figure \ref{fig:linear} shows training curves in this setting when clients hold data from the CIFAR-10 data set. We see that in this setting performance of \textsc{FedAUX} is high, independent of the data heterogeneity levels $\alpha$, suggesting that in the absence of non-convex training dynamics our proposed scoring method actually yields robust weighted ensembles in the sense of \macrocite{mansour2008domain}. We note, that \textsc{FedAUX} also trains much more smoothly, than all other baseline methods. 

\textbf{One-Shot Evaluation:}
In many FL applications, the number of times a client can participate in the Federated training is restricted by communication, energy and/ or privacy constraints \macrocite{guha2019one-shot, papernot2018scalable}. To study these types of settings, we investigate the performance of \textsc{FedAUX} and other FL methods in Federated one-shot learning where we set $T=1$ and $C=100\%$. Table \ref{tab:one-shot} compares performance in this setting for $n=100$ clients training MobileNetv2 resp. ShuffleNet. \textsc{FedAUX} outperforms the baseline methods in this setting at all levels of data heterogeneity $\alpha$.

\section{Conclusion}
In this work, we explored Federated Learning in the presence of unlabeled auxiliary data, an assumption made in the quickly growing area of Federated Distillation. By leveraging auxiliary data for unsupervised pre-training and weighted ensemble distillation we were able to demonstrate that this assumption is rather strong and can lead to drastically improved performance of FL algorithms. These results reveal the limited merit in comparing FD based methods with parameter averaging based methods (which do not make this assumption) and thus have implications for the future evaluation of FD methods in general.

\mainrefs
% \bibliography{references}
% \bibliographystyle{icml2021}

\clearpage

\setcounter{section}{0}
\renewcommand{\thesection}{\Alph{section}}

%\section{Supplement}
\twocolumn[
\icmltitle{\textsc{FedAUX}: Leveraging Unlabeled Auxiliary Data in Federated Learning\\- \textsc{Supplementary Materials} -}
%\icmltitle{\textsc{FedAUX}: Deriving utility from unlabeled Auxiliary data in Federated Distillation}

% It is OKAY to include author information, even for blind
% submissions: the style file will automatically remove it for you
% unless you've provided the [accepted] option to the icml2021
% package.

% List of affiliations: The first argument should be a (short)
% identifier you will use later to specify author affiliations
% Academic affiliations should list Department, University, City, Region, Country
% Industry affiliations should list Company, City, Region, Country

% You can specify symbols, otherwise they are numbered in order.
% Ideally, you should not use this facility. Affiliations will be numbered
% in order of appearance and this is the preferred way.
% \icmlsetsymbol{equal}{*}

% \begin{icmlauthorlist}
% \icmlauthor{Felix Sattler}{}
% \icmlauthor{Tim Korjakow}{}
% \icmlauthor{Roman Rischke}{}
% \icmlauthor{Wojciech Samek}{}

% \end{icmlauthorlist}

% \icmlaffiliation{to}{Department of Artificial Intelligence, Fraunhofer HHI, Berlin, Germany}

% \icmlcorrespondingauthor{Felix Sattler}{felix.sattler@hhi.fraunhofer.de}
% \icmlcorrespondingauthor{Wojciech Samek}{wojciech.samek@hhi.fraunhofer.de}

% % You may provide any keywords that you
% % find helpful for describing your paper; these are used to populate
% % the "keywords" metadata in the PDF but will not be shown in the document
% \icmlkeywords{Machine Learning, ICML}

\vskip 0.3in
]
\section{Extended Related Work Discussion}
\label{supp:related}

%\textsc{FedAVG} \macrocite{mcmahan2017communication}, local SGD \macrocite{lin2020localSGD} and their adaptations to handle heterogeneous environments require to communicate (differential) model updates in each round, which is challenging for large models especially when having limited communication bandwidth and/or high communication cost \macrocite{sattler2019robust}. Different papers have investigated communication-efficient federated learning, see e.g. \macrocite{mcmahan2017communication, sattler2019robust} and the references therein. The introduction of model distillation techniques into federated learning \macrocite{jeong2018distill, lin2020ensemble, itahara2020distill} opens new opportunities both with respect to communication-efficiency and the challenge of a heterogeneous environment.

\textbf{Ensemble Distillation in Federated Learning:}

A new family of Federated Learning methods leverages model distillation \suppcite{hinton2015distill} to aggregate the client knowledge \suppcite{jeong2018distill, lin2020ensemble, itahara2020distill, chen2020feddistill}. These Federated Distillation (FD) techniques have at least three distinct advantages over prior, parameter averaging based methods and related work can be organized according to which of these aspects it primarily focuses on. 

First, Federated Distillation enables aggregation of client knowledge independent of the model architecture and thus allows clients to train models of different structure, which gives additional flexibility, especially in hardware-constrained settings. \textsc{FedMD} \suppcite{li2019fedmd},  Cronus \suppcite{chang2019cronus} and \textsc{FedH2L} \suppcite{li2021fedh2l} address this aspect. FedMD additionally requires to locally pre-train on the \emph{labeled} public data which makes it difficult to perform a fair numerical comparison. FedH2L requires communication of soft-label information after every gradient descent step and is thus not suitable for most practical FL applications where communication channels are intermittent. Cronus addresses aspects of robustness to adversaries but is shown to perform consistently worse than \textsc{FedAVG} in conventional FL. While we do not focus on this aspect, our proposed approach is flexible enough to handle heterogeneous client models (c.f. Appendix \ref{supp:algorithm}).

Second, Federated Distillation has advantageous communication properties. As models are aggregated by means of distillation instead of parameter averaging it is no longer necessary to communicate the raw parameters. Instead it is sufficient for the clients to only send their soft-label predictions on the distillation data. Consequently, the communication in FD scales with the size of the distillation data set and not with the size of the jointly trained model as in the classical parameter averaging based FL. This leads to communication savings, especially if the local models are large and the distillation data set is small. 
Jeong~et.~al~ and subsequent work \suppcite{jeong2018distill, itahara2020distill, seo2020fd, sattler2020communication} focus on this aspect. These methods however are computationally more expensive for the resource constrained clients, as distillation needs to be performed locally and perform worse than parameter averaging based training after the same number of communication rounds. Our proposed approach relies on communication of full models and thus requires communication at the order of conventional parameter averaging based methods.

Third, when combined with parameter averaging, Federated Distillation methods achieve better performance than purely parameter averaging based techniques.  Both the authors in \suppcite{lin2020ensemble} and \suppcite{chen2020feddistill} propose FL protocols, which are based on classical \textsc{FedAVG} and perform ensemble distillation after averaging the received client updates at the server to improve performance. \textsc{FedBE}, proposed by \suppcite{chen2020feddistill}, additionally combines client predictions by means of a Bayesian model ensemble to further improve robustness of the aggregation. Our work primarily focuses on this latter aspect. Building upon the work of \suppcite{lin2020ensemble}, we additionally leverage the auxiliary distillation data for unsupervised pre-training and weigh the client predictions in the distillation step according to their certainty scores to better cope with settings where the client’s data generating distributions are statistically heterogeneous. 
 
We also mention the related work by Guha~et~al.~\suppcite{guha2019one-shot}, which proposes a one-shot distillation method for convex models, where the server distills the locally optimized client models in a single round as well as the work of \suppcite{sun2020federated} which addresses privacy issues in Federated Distillation. Federated one-shot distillation is also addressed in \suppcite{zhou2020distilled}. Federated Distillation for edge-learning was proposed in \suppcite{ahn2019wireless}.

\textbf{Weighted Ensembles:}
The study of weighted ensembles started around the '90s with the work by \suppcite{hashem1993ensemble, perrone1993ensemble, sollich1995learning}. 
A weighted ensemble of models combines the output of the individual models by means of a weighted average in order to improve the overall  generalization performance. The weights allow to indicate the percentage of trust or expected performance for each individual model.
See \suppcite{sharkey1996combining, opitz1999popular} for an overview of ensemble methods. 
Instead of giving each client a static weight in the aggregation step of distillation, we weight the clients on an instance base as in \suppcite{jimenez1998dynamically}, i.e., each clients prediction is weighted using a data-dependent certainty score. Weighted combinations of weak classifiers are also commonly leveraged in centralized settings in the context of of mixture of experts and boosting methods \suppcite{yuksel2012twenty, masoudnia2014mixture, schapire1999brief}.

\textbf{Data Heterogeneity in Federated Learning:}
As the training data is generated independently on the participation devices, Federated Learning problems are typically characterised by statistically heterogeneous client data \suppcite{mcmahan2017communication}. It is well known, that conventional FL algorithms like \textsc{FedAVG} \suppcite{mcmahan2017communication} perform best on statistically homogeneous data and suffer severely in this (“non-iid”) setting  \suppcite{zhao2018federated, li2020covergence}. A number of different studies \suppcite{li2019fedprox, zhao2018federated, sattler2019robust, chen2020feddistill} have tried to address this issue, but relevant performance improvements so far have only been possible under strong assumptions. For instance \suppcite{zhao2018federated} assume that the server has access to \emph{labeled} public data from the \emph{same} distribution as the clients. In contrast, we only assume that the server has access to \emph{unlabeled} public data from a potentially \emph{deviating} distribution. Other approaches \suppcite{sattler2019robust} require high-frequent communication, with up to thousands of communication rounds, between server and clients, which might be prohibitive in a majority of FL applications where communication channels are intermittent and slow. In contrast, our proposed approach can drastically improve FL performance on non-iid data even after just one single communication round.
For completeness, we note that there exists also a different line of research, which aims to address data heterogeneity in FL via meta- and multi-task learning. Here, separate models are trained for each client \suppcite{smith2017fedMTL, wu2020dpFedMTL} or clients are grouped into different clusters with similar distributions \suppcite{ghosh2019noniid, sattler2020clustered}.

\textbf{Unlabeled Data in Federated Learning:}
To the best of our knowledge, there do not exist any prior studies on the use of unlabeled auxiliary data in FL outside of Federated Distillation methods.
Federated semi-supervised learning techniques \suppcite{zhang2020benchmarking, jeong2020federated} assume that clients hold both labeled and unlabeled private data from the local training distribution. In contrast, we assume that the server has access to public unlabeled data that may differ in distribution from the local client data. Federated self-supervised representation learning \suppcite{zhang2020federated} aims to train a feature extractor on private unlabeled client data. In contrast, we leverage self-supervised representation learning at the server to find a suitable model initialization.

\textbf{Personalization and Federated Transfer Learning:}
The aim of Transfer Learning is to transfer learned knowledge from a specific domain or task to related domains or tasks. 
Transfer learning methods are of particular interest in FL settings where the client's local data generating distributions are statistically heterogeneous.  
To address the statistical heterogeneity, methods for personalizing the server model to the client's local distributions, e.g. by using distillation \suppcite{li2019fedmd}, parameter fine-tuning \suppcite{wang2019finetuning, mansour2020personalization} or regularization \suppcite{li2019fedprox}, have been proposed. Transferring knowledge from one domain to another domain raises the question of the generalization capabilities and domain adaptation theory gives answers in the form of generalization bounds. Particularly, multiple-source domain adaptation theory \suppcite{mansour2008domain, bendavid2010domain, hoffman2018domain}, which considers the capabilities of transferring knowledge from multiple source domains to some target domain, is relevant for FL.
One interesting question when having knowledge in multiple source domains is how to weight each individual source domain in the process of transferring knowledge to the target domain.
In the \textsc{FedDF} algorithm \suppcite{lin2020ensemble}, the client's local hypotheses are uniformly averaged to obtain a global hypothesis and it is remarked that domain adaptation theory \suppcite{mansour2008domain, hoffman2018domain} has shown such standard convex combinations of source hypotheses not to be robust for the target domain.   
A distribution-weighted combination of the local hypotheses, as suggested by domain adaptation theory \suppcite{mansour2008domain} \suppcite{hoffman2018domain}, based on a privacy-preserving local distribution estimation is posed as an open problem for FL in \suppcite{lin2020ensemble}. We address exactly this open question. 

\begin{algorithm}[t!]
   \caption{\textsc{FedAUX} Preparation Phase (with different model prototypes $\mathcal{P}$)}
   \label{alg:FedAUXprep}
\begin{algorithmic}
\STATE \textbf{init:} Split $D^- \cup D_{distill} \leftarrow D_{aux}$
\STATE \textbf{init:} HashMap $\mathcal{R}$ that maps client $i$ to model prototype $P$
\STATE \underline{Server does:}
\FOR{each model prototype $P\in\mathcal{P}$}
\STATE $h^P_0\leftarrow \text{train\_self\_supervised}(h^P, D_{aux})$
\ENDFOR
\FOR{each client $i \in \{1,..,n\}$ \textbf{in parallel}}
\STATE \underline{Client $i$ does:}
\STATE $P\leftarrow \mathcal{R}[i]$
\STATE $\sigma^2\leftarrow \frac{8\ln(1.25\delta^{-1})}{\varepsilon^2\lambda^2(|D_i|+|D^-|)^2}$
\STATE $w_i^* \leftarrow \arg\min_{w} J(w, h^P_0, D_i, D^-) + \mathcal{N}(\mathbf{0}, I\sigma^2)$
\STATE $\gamma_i\leftarrow\max_{x\in D_i\cup D^-}\|h^P_0(x)\|$
%\STATE with $s_i : x\mapsto (1+\exp(-\langle w_i^*, \gamma^{-1}h^{P}_0(x)\rangle))^{-1}+\xi$
\ENDFOR 
\STATE \underline{Server does:}
\FOR{$i=1,..,n$}
\STATE create HashMap 
\STATE $s_i \leftarrow \{x\mapsto(1+\exp(-\langle w_i^*, \gamma_i^{-1}h^{P}_0(x)\rangle))^{-1}+\xi$ for $x\in D_{distill}\}$
\ENDFOR
\end{algorithmic}
\end{algorithm}

\section{Data Splitting Methodology}
\label{supp:data_splitting}
We split the training data among the clients using the common Dirichlet splitting strategy proposed in \suppcite{hsu2019measuring} and later used in \suppcite{lin2020ensemble} and \suppcite{chen2020feddistill}. This approach allows us to smoothly adapt the level of heterogeneity in the client data via the concentration parameter $\alpha$. To generate the data split, we sample $c$ vectors
\begin{align}
    p_1,..,p_c\sim\text{Dir}(\alpha),
\end{align}
where c is the number of classes, from the symmetric $n$-categorical Dirichlet distribution. For all $p_i\in\mathbb{R}_{\geq 0}^n$ it then holds $\|p_i\|_1=1$. The vectors are then stacked To address the statistical heterogeneity, methods for personalizing the server model to the client's local distributions, e.g. by using distillation \suppcite{li2019fedmd}, parameter fine-tuning \suppcite{wang2019finetuning, mansour2020personalization} or regularization \suppcite{li2019fedprox}, have been proposed. Transferring knowledge from one domain to another domain raises the question of the general

into a matrix 
\begin{align}
    P = [p_1, .., p_c]\in\mathbb{R}^{n,c}
\end{align}
which is standardized, by repeatedly normalizing the columns and rows. This process converges quickly and is stopped after 1000 iterations. Let $M_j$ be the amount of data points belonging to class $j$ in the training data set. Each client $i$ is then assigned $P_{i,j}M_j$ (non-overlapping) data points from all classes $j=1,..,c$. Figure \ref{fig:splitting_method} illustrates the splitting procedure and displays random splits of data for $n=20$ and $c=10$. In all our experiments, the data splitting process is controlled by a random seed, to ensure that the different baseline methods are all trained on the same split of data. 

\begin{figure*}[t!]
    \centering
    \includegraphics[width=1.0\textwidth]{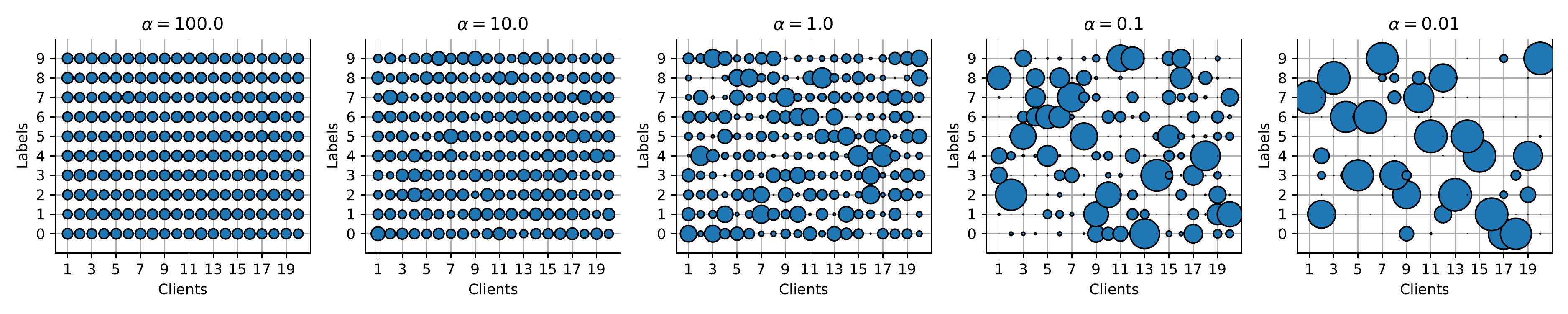}
    \caption{Illustration of the Dirichlet data splitting strategy used throughout the paper. Dot size represents number of data points each client holds from any particular class. Lower values of $\alpha$ lead to more heterogeneous splits of data.}
    \label{fig:splitting_method}
    
    % \includegraphics[width=1.0\textwidth]{images_supplement/alpha_cifar100.pdf}
    % \caption{Illustration of the Dirichlet data splitting strategy used throughout the paper. Dot size represents number of data points each client holds from any particular class. Lower values of $\alpha$ lead to more heterogeneous splits of data.}
    % \label{fig:splitting_method}
\end{figure*}

\begin{algorithm}[t!]
   \caption{\textsc{FedAUX} Training Phase (with different model prototypes $\mathcal{P}$). Training requires feature extractors $h_0^P$ and scores $s_i$ from Alg. \ref{alg:FedAUXprep}. The same $D^- \cup D_{distill} \leftarrow D_{aux}$ as in Alg. \ref{alg:FedAUXprep} is used. Choose learning rate $\eta$ and set $\xi=10^{-8}$.}
   \label{alg:FedAUXtrain}
\begin{algorithmic}
\STATE \textbf{init:} HashMap $\mathcal{R}$ that maps client $i$ to model prototype $P$
\STATE \textbf{init:} Inverse HashMap $\tilde{\mathcal{R}}$ that maps model prototype $P$ to set of clients (s.t. $i\in\tilde{\mathcal{R}}[\mathcal{R}[i]]~\forall i$) 
\STATE \textbf{init:} Initialize model prototype weights $\theta^P$ with feature extractor weights $h^P$ from Alg. \ref{alg:FedAUXprep}
\FOR{communication round $t=1,..,T$}
\STATE select subset of clients $\mathcal{S}_t\subseteq \{1,..,n\}$
\FOR{selected clients $i \in \mathcal{S}_t$ \textbf{in parallel}}
\STATE \underline{Client $i$ does:}
\STATE $\theta_i\leftarrow \text{train}(\theta_0\leftarrow\theta^{\mathcal{R}[i]}, D_i)$\hfill \# Local Training
\ENDFOR
\STATE \underline{Server does:}
\FOR{each model prototype $P\in\mathcal{P}$}
\STATE $\theta^P\leftarrow \sum_{i\in \mathcal{S}_t\cap \tilde{\mathcal{R}}[P]}\frac{|D_i|}{\sum_{l\in \mathcal{S}_t\cap \tilde{\mathcal{R}}[P]}|D_l|} \theta_i$ \hfill \# Parameter
\STATE \hfill \# Averaging
\FOR{mini-batch $x \in D_{distill}$}
\STATE $\tilde{y} \leftarrow \sigma\left(\frac{\sum_{i\in \mathcal{S}_t} s_i[x]f_i(x, \theta_i)}{\sum_{i\in \mathcal{S}_t} s_i[x]}\right)$\hfill \# Can be arbitrary
\STATE $\theta^P \leftarrow \theta^P - \eta\frac{\partial D_{KL}(\tilde{y}, \sigma(f(x, \theta^P)))}{\partial \theta^P}$ \hfill \# Optimizer
\ENDFOR
\ENDFOR
\ENDFOR
\end{algorithmic}
\end{algorithm}

\begin{figure*}[t!]
    \centering
    \includegraphics[width=0.95\textwidth]{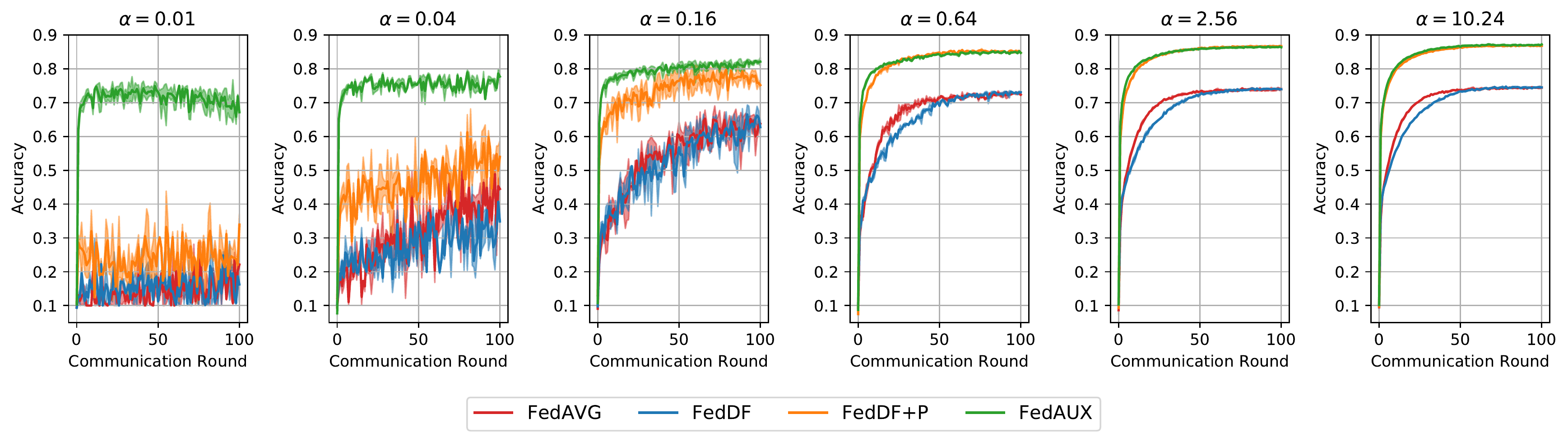}
    \caption{Detailed training curves for ResNet-8 trained on CIFAR-10, $n=80$ Clients, $C=40\%$.}
    \label{fig:training_curves_resnet}

    \includegraphics[width=0.95\textwidth]{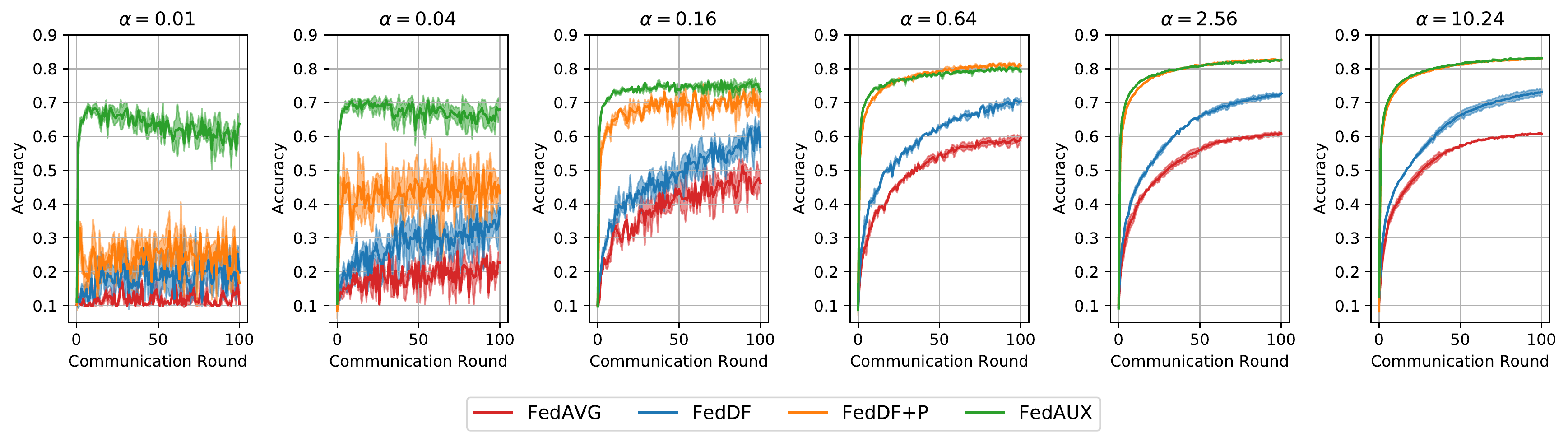}
    \caption{Detailed training curves for MobileNetv2 trained on CIFAR-10, $n=100$ Clients, $C=40\%$.}
    \label{fig:training_curves_mobilenet}

    \includegraphics[width=0.95\textwidth]{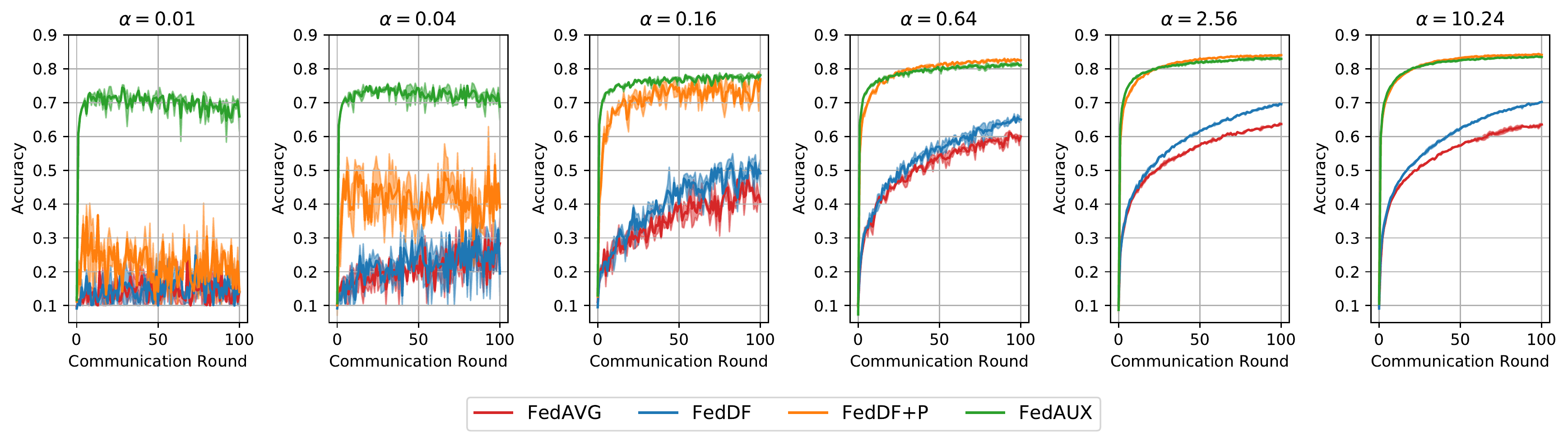}
    \caption{ Shufflenet trained on CIFAR-10, $n=100$ Clients, $C=40\%$.}
    \label{fig:training_curves_shufflenet}

    \includegraphics[width=0.95\textwidth]{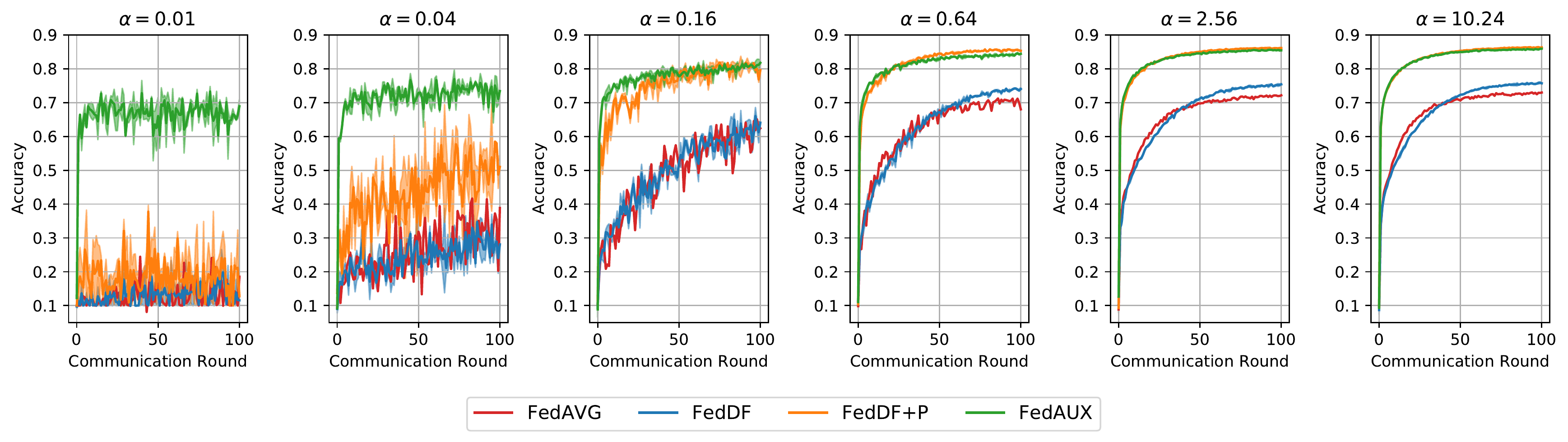}
    \caption{Detailed training curves for mixed models trained on CIFAR-10. 20 each train ResNet8, MobileNetv2 and Shufflenet respectively.}
    \label{fig:training_curves_mixed}
\end{figure*}

\section{Detailed Algorithm}
\label{supp:algorithm}
The training procedure of \textsc{FedAUX} can be divided into a preparation phase, which is given in Alg. \ref{alg:FedAUXprep} and  a training phase, which is given in Alg. \ref{alg:FedAUXtrain}. We describe the general setting where clients may hold different model prototypes $P$ from a set of prototypes $\mathcal{P}$. This general setting simplifies to the setting described in Sec.~\ref{sec:method} if $|\mathcal{P}|=1$.

\textbf{Preparation Phase:} In the preparation phase, the server uses the unlabeled auxiliary data $D_{aux}$, to pre-train the feature extractor $h^P$ for each model prototype $P$ using self-supervised training. Suitable methods for self-supervised pre-training are contrastive representation learning \suppcite{chen2020simple}, or self-supervised language modeling/ next-token prediction \suppcite{devlin2018bert}. The pre-trained feature extractors $h^P_0$ are then communicated to the clients and used to initialize part of the local classifier $f=g\circ h$. The server also communicates the negative data $D^-$ to the clients (in practice we can instead communicate the extracted features $\{|h_0^P(x)|x\in D^-\}$ of the raw data $D^-$ to save communication). Each client then optimizes the logistic similarity objective $J$ \eqref{eq:ERM} and sanitizes the output by adding properly scaled Gaussian noise. Finally, the sanitized scoring model $w_i^*$ is communicated to the server, where it is used to compute certainty scores $s_i$ on the distillation data (the certainty scores can also be computed on the clients, however this results in additional communication of distillation data and scores). 

\textbf{Training Phase:} The training phase is carried out in $T$ communication rounds. In every round $t\leq T$, the server randomly selects a subset $\mathcal{S}_t$ of the overall client population and transmits to them the latest server models $\theta^\mathcal{R}[i]$, which match their model prototype $P$  (in round $t=1$ only the pre-trained feature extractor $h^P_0$ is transmitted). Each selected client updates it's local model by performing multiple steps of stochastic gradient descent (or it's variants) on it's local training data. This results in an updated parameterization $\theta_i$ on every client, which is communicated to the server. After all clients have finished their local training, the server gathers the updated parameters $\theta_i$. For each model prototype $P$ the corresponding parameters are then aggregated by weighted averaging. Using the model averages as a starting point, for each prototype the server then distills a new model, based on the client's certainty-weighted predictions.

\section{Qualitative Comparison with Baseline Methods}
\label{supp:qualitative}
Table \ref{tab:compare_qualitative} gives a qualitative comparison between \textsc{FedAUX} and the baseline methods \textsc{FedAVG} and \textsc{FedDF}. 
\begin{itemize}
    \item Compared with \textsc{FedAVG} and \textsc{FedDF}, \textsc{FedAUX} additionally requires the clients to once solve the $\lambda$-strongly convex ERM \eqref{eq:ERM}. For this problem linearly convergent algorithms are known \suppcite{liu1989limited} and thus the computational overhead is negligible compared with the complexity of multiple rounds of locally training deep neural networks.
    \item  \textsc{FedAUX} also adds computational load to the server for self-supervised pre-training and computation of the certainty scores $s_i$. As the server is typically assumed to have massively stronger computational resources than the clients, this can be neglected.
    \item  Once, in the preparation phase of \textsc{FedAUX}, the scoring models $w_i^*$ need to be communicated from the clients to the server. The overhead of communicating these $H$-dimensional vectors, where $H$ is the feature dimension, is negligible compared to the communication of the full models $f_i$.
    \item  \textsc{FedAUX} also requires the communication of the negative data $D^-$ and the feature extractor $h_0$ from the server to the clients. The overhead of sending $h_0$ is lower than sending the full model $f$, and thus the total downstream communication is increased by less than a factor of $(T+1)/T$. The overhead of sending $D^-$ is small (in our experiments $|D^-|=0.2|D_{aux}|$) and can be further reduced by sending extracted features $\{|h_0^P(x)|x\in D^-\}$ instead of the full data. For instance, in our experiments with ResNet-8 and CIFAR-100  we have $|D^-|=12000$ and $h_0^P(x)\in\mathbb{R}^{512}$, resulting in a total communication overhead of $12000\times512\times4B=24.58$MB for $D^-$. For comparison the total communication overhead of once sending the parameters of ResNet-8 (needs to be done $T$ times) is $19.79$MB.
    \item Communicating the scoring models $w_i^*$ incurs additional privacy loss for the clients. Using our proposed sanitation mechanism this process is made $(\varepsilon, \delta)$-differentially private. Our experiments in section \ref{sec:ex_privacy} demonstrate that \textsc{FedAUX} can achieve drastic performance improvements, even under conservative privacy constraints. All empirical results reported are obtained with $(\varepsilon,\delta)$ differential privacy at $\varepsilon=0.1$ and $\delta=10^{-5}$.
    \item Finally, \textsc{FedAUX} makes the additional assumption that unlabeled auxiliary data is available to the server. This assumption is made by all Federated Distillation methods including \textsc{FedDF}.
\end{itemize}

\definecolor{Gray}{gray}{0.9}
\begin{table*}[t!]
    \centering
    \caption{\textbf{Qualitative Comparison:} Complexity, communication overhead, privacy loss after $T$ communication rounds as well as implicit assumptions made by different Federated Learning methods.}
    \label{tab:compare_qualitative}
    {\renewcommand{\arraystretch}{1.5}
    \begin{tabular}{p{2.5cm}|p{2.5cm}p{2.5cm}p{3.9cm}p{3.6cm}}
    \toprule
    & \textsc{FedAVG} & \textsc{FedDF} & \textsc{FedAUX} (preparation phase) & \textsc{FedAUX} (training phase)\\
    \midrule
    \rowcolor{Gray}
    Operations (Clients)     &  Local Training ($\times T$)& Local Training ($\times T$)& Solve $\lambda$-strongly convex ERM \eqref{eq:ERM} & Local Training ($\times T$) \\
    Operations (Server)    &  Model Averaging ($\times T$)& Model Averaging, Distillation ($\times T$) & Self-Supervised Pre-training of $h_0$, Computation of certainty scores $s_i$  & Model Averaging, Distillation ($\times T$) \\
    \rowcolor{Gray}
    Communication Clients $\rightarrow$ Server    & Model Parameters $f_i$ ($\times T$) & Model Parameters $f_i$ ($\times T$)& Scoring Models $w_i^*$ & Model Parameters $f_i$ ($\times T$)  \\
    Communication Server $\rightarrow$ Clients    & Model Parameters $f$ ($\times T$) & Model Parameters $f$ ($\times T$) & Negative Data $D^-$, Feature Extractor $h_0$ & Model Parameters $f$ ($\times T$) \\
    \rowcolor{Gray}
    Privacy Loss & Privacy loss of communicating $f_i$ ($\times T$) & Privacy loss of communicating $f_i$ ($\times T$)& $(\varepsilon, \delta)$-DP & Privacy loss of communicating $f_i$ ($\times T$)\\
    Assumptions & No Assumptions & Auxiliary Data & Auxiliary Data & Auxiliary Data  
    \end{tabular}
    }
    
\end{table*}

\section{Additional Results and Detailed Training Curves}
\label{supp:training_curves}
In this sections we give detailed training curves for the results shown in Figure \ref{fig:summary_distillation}. As can be seen, in the highly non-iid setting at $\alpha\in\{0.01,0.04\}$, all methods exhibit convergence issues. This behavior is well known in FL and is described for instance in \suppcite{zhao2018federated, sattler2019robust}. Notably, the performance of \textsc{FedAUX} after one single communication round exceeds the maximum achieved performance of all other methods over the entire course of training. At higher values of $\alpha\geq 0.16$ all methods train smoothly and validation performance asymptotically increases over the curse of training. \textsc{FedAUX} dominates all baseline methods at all communication rounds in the heterogeneous settings. In the mostly iid-setting at $\alpha=10.24$ \textsc{FedAUX} is en par with the pre-trained version of \textsc{FedDF}. 

Table \ref{tab:cifar100} compares performance of \textsc{FedAUX} to baseline methods on the CIFAR-100 data set. Again \textsc{FedAUX} outperforms \textsc{FedAVG} and \textsc{FedDF} across all level of data heterogeneity $\alpha$ and shows superior performance to the improved \textsc{FedDF+P} when data is highly heterogeneous at $\alpha=\{0.01, 0.04\}$. Interestingly in this setting \textsc{FedDF+P} manages to slightly outperform \textsc{FedAUX} at medium data heterogeneity levels $\alpha=\{0.16, 0.64\}$. This indicates that our proposed differentially private certainty scoring method may insufficiently approximate the true client certainty in this setting. We leave potential improvements of this mechanism for future work.

% Table \ref{tab:local_epochs} shows the effects of different local optimizers and numbers of training epochs on the performance of different FL methods. 

\begin{table}[t!]
    \centering
        \caption{Results on data sets with \textbf{higher number of classes.} Training ResNet-8 on \textbf{CIFAR-100}. Accuracy achieved after $T=100$ communication rounds by different Federated Distillation methods at different levels of data heterogeneity $\alpha$. STL-10 is used as auxiliary data set.}
        \label{tab:cifar100}
\begin{tabular}{lrrrrrr}
\toprule
& \multicolumn{6}{c}{$\alpha$}\\
\cline{2-7}
{} &  $0.01$ &  $0.04$ &  $0.16$ &  $0.64$ &  $2.56$ &  $10.24$ \\
\midrule
FedAVG  &    24.1 &    36.3 &    47.2 &    50.7 &    52.2 &     52.2 \\
FedDF   &    11.4 &    24.4 &    45.0 &    49.5 &    52.5 &     51.2 \\
FedDF+P &    18.2 &    42.0 &    \textbf{58.0} &    \textbf{60.8} &    61.6 &     62.0 \\
FedAUX  &    \textbf{34.1} &    \textbf{47.4} &    56.4 &    60.7 &    \textbf{62.5} &     \textbf{62.5} \\
\bottomrule
\end{tabular}
\end{table}

% \begin{table}[]
%     \centering
%         \caption{Evaluating the effects of different \textbf{local optimizers} and numbers of \textbf{local training} epochs $E$. Federated training of ShuffleNet with $n=20$ clients at a participation rate of $40\%$ in a homogeneous setting with $\alpha=10.24$. Maximum accuracy after $T=100$ rounds.}
%     \label{tab:local_epochs}
% \begin{tabular}{lrrrcrrr}
% \toprule
% & \multicolumn{3}{c}{SGD, $\text{lr}=0.1$} & & \multicolumn{3}{c}{Adam, $\text{lr}=0.001$}\\ 
% \cline{2-4}\cline{6-8}{}
% Epochs $E$ &     1  &     5  &     40 &    & 1  &     5  &     40 \\
% \midrule
% \textsc{FedAVG}   &  78.2 &  80.7 &  78.9 &&  76.0 &  78.7 &  82.0 \\
% \textsc{FedDF}    &  76.8 &  81.1 &  80.5 & & 79.0 &  79.8 &  82.3 \\
% \textsc{FedAVG+P} &  \textbf{85.2} &  \textbf{86.0} &  86.6 & & 87.3 &  88.2 &  88.3 \\
% \textsc{FedDF+P}  &  83.6 &  85.5 &  \textbf{86.7} & & \textbf{87.5} &  \textbf{88.3} &  88.6 \\
% \textsc{FedAUX}   &  83.8 &  85.6 &  86.7 & & 87.4 &  87.8 &  \textbf{88.7} \\
% \bottomrule
% \end{tabular}
% \end{table}

\section{Details on generating Imagenet subsets}
\label{supp:iamgenet_subsets}
To simulate the effects of a wide variety of auxiliary data sets on the training performance of \textsc{FedAUX}, we generate different structured subsets of the ImageNet data base (resized to $32\times 32\times 3$). Each subset is defined via a top-level Wordnet ID which is shown in Table \ref{tab:wordnetids}. To obtain the images from the subset, we select all leaf-node IDs of the respective top-level IDs via the Imagenet API
\begin{center}
\url{http://www.image-net.org/api/text/wordnet.structure.hyponym?wnid=<top-level ID>&full=1}     
\end{center}
and then take only those classes from the full Imagenet data set, which match these leaf-node IDs. Table \ref{tab:wordnetids} also shows the number of samples contained in every subset that was generated this way.

\begin{table}[t!]
    \centering
        \caption{\textbf{Auxiliary data sets} used in this study and their defining Wordnet IDs and data sets sizes.}
    \label{tab:wordnetids}
    \begin{tabular}{lll}
    \toprule
    Data set & Wordnet ID & Dataset Size\\
    \midrule
       Imagenet Devices  &  n03183080 & 165747\\
       Imagenet Birds & n01503061 & 76541\\
       Imagenet Animals & n00015388 & 510530\\
       Imagenet  Dogs & n02084071 & 147873\\
       Imagenet Invertebrates & n01905661 & 79300\\
       Imagenet Structures & n04341686 & 74400\\
       \bottomrule
    \end{tabular}
\end{table}

%Subset: fungus, # Samples: 7800
%Subset: plant_flora_plantlife, # Samples: 2600
%Subset: dogs, # Samples: 147873
%Subset: person, # Samples: 3900
%Subset: animal, # Samples: 510530
%Subset: artifact, # Samples: 667224
%Subset: devices, # Samples: 165747
%Subset: structure_construction, # Samples: 74400
%Subset: birds, # Samples: 76541

\section{Details on the Implementation and Results of the NLP Benchmarks}
\label{supp:transformer_implementation}

As mentioned in section 4.3 \textit{Evaluating} \textsc{FedAUX} \textit{on NLP Benchmarks} we used TinyBERT as a model for our NLP experiments. TinyBERT was pre-trained on Bookcorpus\footnote{\url{https://huggingface.co/datasets/bookcorpus}} which led us to select the same dataset as a public dataset in order to follow the methodology outlined in section \ref{sec:pretrain}. As private datasets we chose the AG News dataset\footnote{\url{https://huggingface.co/datasets/ag_news}} \suppcite{Zhang2015CharacterlevelCN}, a topic classification dataset, and the english texts from the Multilingual Amazon Reviews Corpus\footnote{\url{https://huggingface.co/datasets/amazon_reviews_multi}} \suppcite{marc_reviews}, which we use for predicting how many stars a review gets. The pre-trained weights and the tokenizer for TinyBERT are available at the corresponding repository\footnote{\url{https://huggingface.co/huawei-noah/TinyBERT_General_4L_312D}}. All experiments were conducted using $\epsilon = 0.1$ and $\delta=10^{-5}$ as differential privacy parameters, 1 epoch for local training and distillation, ten clients and 100\% participation rate as well as 160000 disjoint data points, which were sampled from BookCorpus, for the public and distillation datasets respectively. Furthermore the ADAM optimizer with a learning rate of $10^{-5}$ was used for both local training and distillation. The regularization strength of the logistic regression classifier was set to $0.01$. The batch size for $D_i, D^{-}$ and $D_{distill}$ was 32. Detailed results for figure \ref{fig:transformer} are depicted in table \ref{tab:transformer_results}.

\begin{table*}[t!]
    \centering
    \caption{\textbf{NLP Benchmarks} of different FL methods. Maximum accuracy achieved after $T=20$ communication rounds at participation-rate $C=100\%$.}
    \label{tab:transformer_results}
\begin{tabular}{lllcll}
\toprule
 & \multicolumn{2}{c}{AG News} & & \multicolumn{2}{c}{Amazon} \\
\cline{2-3}
\cline{5-6}
Method &            
$\alpha=0.01$ &            
$\alpha=1.0$ &
&
$\alpha=0.01$ &           
$\alpha=1.0$ \\
\midrule
\textsc{FedAVG+P}&
78.80$\pm$4.40 & 
\textbf{92.17$\pm$1.98} &
&
41.70$\pm$0.58 &          
\textbf{55.17$\pm$0.40} \\
\textsc{FedDF+P} &
78.05$\pm$7.64 &           
90.83$\pm$0.25 &
&
38.04$\pm$0.84 &          
54.63$\pm$0.66 \\
\textsc{FedAUX}& 
\textbf{85.04$\pm$1.21} &          
91.00$\pm$0.30 &
&
\textbf{49.11$\pm$0.22} &          
54.86$\pm$0.61 \\
\bottomrule
\end{tabular}
\end{table*}

\section{Hyperparameter Evaluation}
\label{supp:hyperparameter}
In this section we provide a detailed hyperparameter analysis for our proposed method and the baseline methods used in this study. For all methods we use the very popular Adam optimizer for both local training and distillation. We vary the learning rate in $\{1e-2, 1e-3, 1e-4, 1e-5\}$ for local training an distillation. For FedPROX, we vary the parameter $\lambda_{prox}$, controlling the proximal term in the training objective in $\{1e-2, 1e-3, 1e-4, 1e-5\}$. Figure \ref{fig:HPO} compares the maximum achieved accuracy after 50 communication rounds for the different methods and hyperparameter settings, for a FL setting with 20 clients training ResNet-8 on CIFAR-10 at a participation-rate of 40\%. The auxiliary data set we use is STL-10. 

For each method and each level of data heterogeneity, table \ref{tab:HPO} shows the accuracy of the best performing combination of hyperparameters. As we can see \textsc{FedAUX} matches the performance of the best performing methods in the iid setting with $\alpha=100.0$ and outperforms all other methods distinctively in the non-iid setting with $\alpha=0.01$.
\begin{figure*}[t!]
    \centering
    \includegraphics[width=\textwidth]{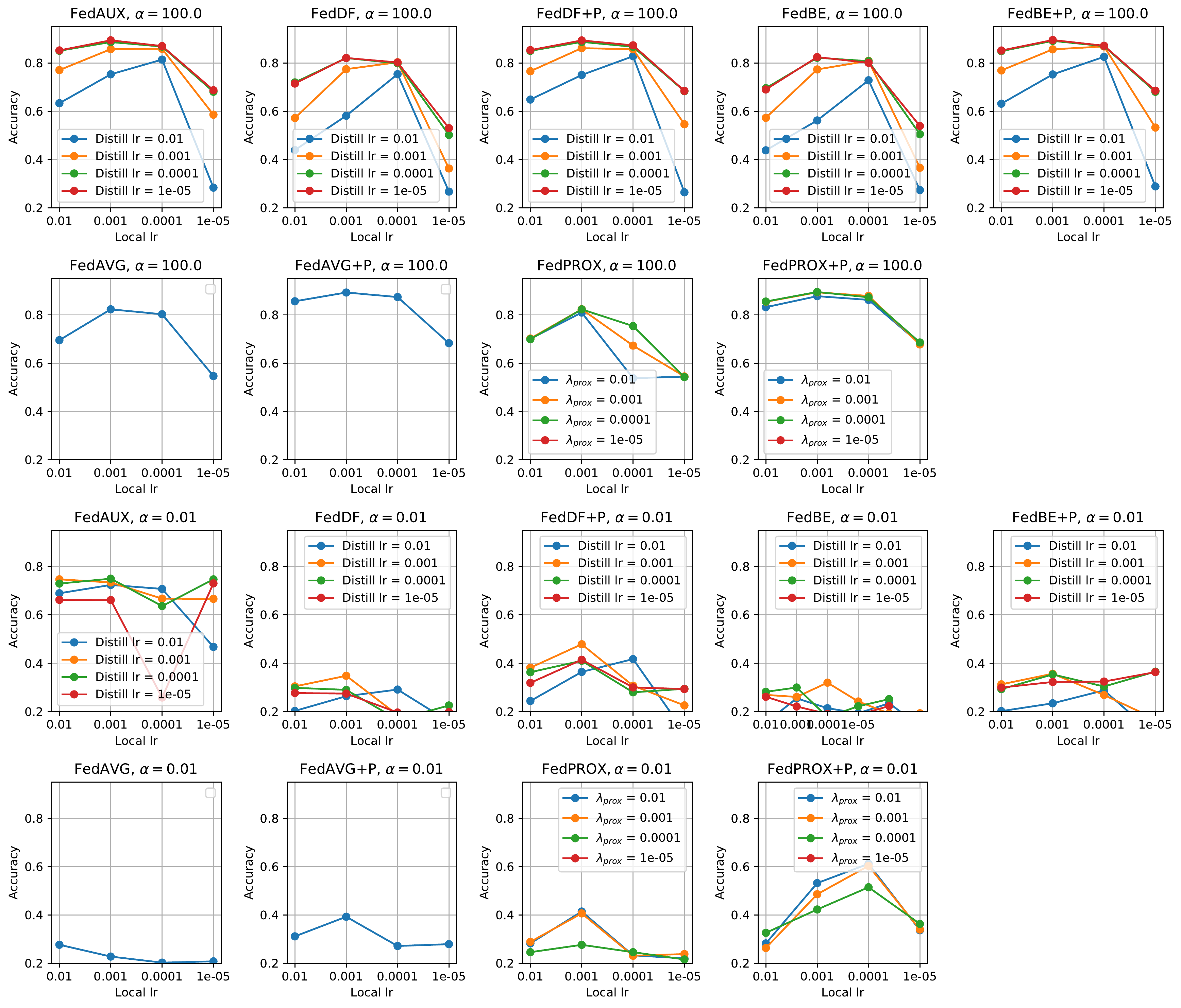}
    \caption{Results of our \textbf{hyperparameter optimization} for ResNet8. 20 Clients are trained for 50 communication rounds, at a participation rate of $C=40\%$. Both local training and distillation is performed for 1 epoch.}
    \label{fig:HPO}
\end{figure*}

\begin{table*}[t!]
    \centering
        \caption{\textbf{Best performing hyperparameter combinations} for each method when training ResNet8 with $n=20$ clients for 50 communication rounds at a participation rate of $C=40\%$. Both local training and distillation is performed for 1 epoch. Methods sorted by top accuracy.}
    \label{tab:HPO}

\begin{tabular}{lrrllr}
\toprule
    Method & Alpha &  Local LR & Distill LR &   $\lambda$ FedProx &  Accuracy \\
\midrule
 FedPROX+P &   100 &    0.001 &          - &            0.0001 &    0.8946 \\
    FedAUX &       &    0.001 &      1e-05 &                 - &    0.8941 \\
   FedDF+P &       &    0.001 &      1e-05 &                 - &    0.8936 \\
  FedAVG+P &       &    0.001 &          - &                 - &    0.8924 \\
     FedBE &       &    0.001 &      1e-05 &                 - &    0.8246 \\
   FedPROX &       &    0.001 &          - &             0.001 &    0.8232 \\
    FedAVG &       &    0.001 &          - &                 - &    0.8228 \\
     FedDF &       &    0.001 &      1e-05 &                 - &    0.8210 \\
     \midrule
    FedAUX &  0.01 &    0.001 &     0.0001 &                 - &    0.7501 \\
 FedPROX+P &       &     0.01 &          - &              0.01 &    0.6122 \\
   FedDF+P &       &    0.001 &      0.001 &                 - &    0.4786 \\
   FedPROX &       &    0.001 &          - &              0.01 &    0.4145 \\
  FedAVG+P &       &    0.001 &          - &                 - &    0.3929 \\
     FedDF &       &    0.001 &      0.001 &                 - &    0.3481 \\
     FedBE &       &    0.001 &      0.001 &                 - &    0.3196 \\
    FedAVG &       &   0.0001 &          - &                 - &    0.2770 \\
\bottomrule
\end{tabular}

\end{table*}

%\begin{figure}
%    \centering
%    \includegraphics[width=0.5\textwidth]{images/hyperparameters_learning_rate.pdf}
%    \caption{Learning Rate.}
%    \label{fig:}
%\end{figure}

\section{Domain-Adaptation-Theoretic Motivation for weighted ensemble distillation}
\label{supp:domain_adaptation}
Domain adaptation theory \suppcite{mansour2008domain, bendavid2010domain, hoffman2018domain}, and in particular with multiple sources, can be used in order to obtain generalization bounds for non-iid FL settings as it has been done in \suppcite{lin2020ensemble} for uniformly averaging of the client hypotheses to obtain a global hypothesis. From multiple-source adaptation theory we know that a distribution-weighted combination of the client hypotheses is robust w.r.t. generalization for any target domain that is a convex combination of the source domains. However, exact information about the local distributions is rarely present in practical applications of FL and if it is, then directly sharing this information with the server in order to get a better global hypothesis is often not feasible in FL settings due to privacy restrictions. Nonetheless, settings with exact or approximate information about the local distributions (e.g. obtained by KDE) show us, what is possible if the server had access to this information and thus leads to benchmarks with a solid theoretic foundation to which we can compare our approach. Consequently, we aim at a weighting of the client’s local hypotheses based on a privacy-preserving local distribution estimation that respects both the theoretical generalization capabilities and the privacy restrictions in FL.

With the help of a toy example in Fig.~\ref{fig:toy_domain_adaptation} we illustrates that the certainty scores $s_i(\cdot), i \in \{1,\ldots,n\}$, obtained via privacy-preserving logistic regression give a good approximation to the distribution-weights suggested by domain adaptation theory \suppcite{mansour2008domain}, i.e. we show that $s_i(x) / \sum_{j} s_j(x) \approx D_i(x) / \sum_j D_j(x)$ for $x \in \mathcal{X}$. 

\begin{figure*}[t!]
    \centering
    \includegraphics[width=\textwidth]{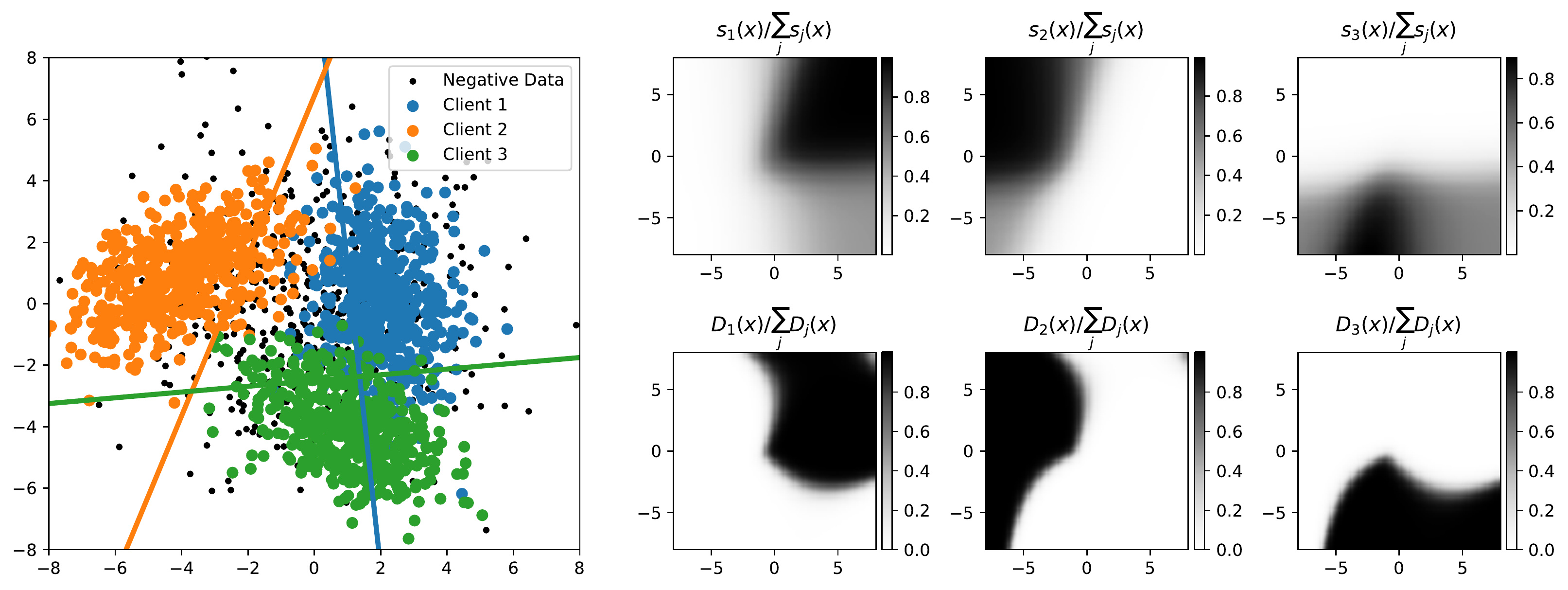}
    \caption{Left: Toy example with 3 clients holding data sampled from multivariate Gaussian distributions $D_1$, $D_2$ and $D_3$. All clients solve optimization problem $J$ by contrasting their local data with the public negative data, to obtain scoring models $s_1$, $s_2$, $s_3$ respectively. As can be seen in the plots to the right, our proposed scoring method approximates the robust weights proposed in \suppcite{mansour2008domain}  as it holds $s_i(x)/\sum_j s_j(x)\approx D_i(x)/\sum_j D_j(x)$ on the support of the data distributions.}
    \label{fig:toy_domain_adaptation}
\end{figure*}

\section{Proof of Theorem \ref{theo:1}}
\label{supp:proof}
\begin{theorem}
If $R(\cdot)$ is differentiable and 1-strongly convex and $l$ is differentiable with $|l'(z)|\leq 1$ $\forall z$, then the $\ell^2$-sensitivity $\Delta_2(\mathcal{M})$ of the mechanism 
\begin{align}
    \mathcal{M} : D_i \mapsto \arg\min_{w} J(w, h_0, D_i, D^-)
\end{align}
is at most $2(\lambda(|D_i|+|D^-|))^{-1}$.
\end{theorem}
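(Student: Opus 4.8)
The plan is to adapt the classical stability argument for regularized empirical risk minimizers of \macrocite{chaudhuri2011differentially}. Fix two adjacent client datasets $D_i$ and $D_i'$ differing in exactly one element, abbreviate $J_D(w) := J(w, h_0, D, D^-)$, and set $w_1 := \arg\min_w J_{D_i}(w)$ and $w_2 := \arg\min_w J_{D_i'}(w)$; both minimizers exist and are unique because $J_D$ is $\lambda$-strongly convex in $w$ (the logistic loss term is convex, and $\lambda R$ is $\lambda$-strongly convex since $R$ is $1$-strongly convex), as already observed below \eqref{eq:ERM}. Since $D_i$ and $D_i'$ have equal cardinality, the normalizer $a = (|D_i|+|D^-|)^{-1} =: N^{-1}$ is the same in $J_{D_i}$ and $J_{D_i'}$, so, writing $x$ for the removed and $x'$ for the inserted sample and treating $\tilde h_0$ as a fixed feature map with $\|\tilde h_0(\cdot)\|_2 \le 1$, the difference $g(w) := J_{D_i}(w) - J_{D_i'}(w) = N^{-1}\big(l(t_x\langle w,\tilde h_0(x)\rangle) - l(t_{x'}\langle w,\tilde h_0(x')\rangle)\big)$ involves only this single pair of loss terms.

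First I would combine the first-order optimality conditions $\nabla J_{D_i}(w_1)=0$ and $\nabla J_{D_i'}(w_2)=0$ with the monotonicity characterization of $\lambda$-strong convexity of $J_{D_i}$, using $\nabla J_{D_i}(w_2) = \nabla J_{D_i'}(w_2) + \nabla g(w_2) = \nabla g(w_2)$:
\begin{align*}
\lambda\|w_1-w_2\|_2^2
&\le \big\langle \nabla J_{D_i}(w_1)-\nabla J_{D_i}(w_2),\ w_1-w_2\big\rangle\\
&= \big\langle -\nabla g(w_2),\ w_1-w_2\big\rangle
\le \|\nabla g(w_2)\|_2\,\|w_1-w_2\|_2.
\end{align*}
Dividing by $\|w_1-w_2\|_2$ (the case $w_1=w_2$ being trivial) yields $\|w_1-w_2\|_2 \le \lambda^{-1}\|\nabla g(w_2)\|_2$.

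It then remains to bound $\|\nabla g(w_2)\|_2$. Differentiating, $\nabla g(w) = N^{-1}\big(l'(t_x\langle w,\tilde h_0(x)\rangle)\,t_x\,\tilde h_0(x) - l'(t_{x'}\langle w,\tilde h_0(x')\rangle)\,t_{x'}\,\tilde h_0(x')\big)$; since $|l'(\cdot)|\le 1$, $|t_x|=|t_{x'}|=1$, and $\|\tilde h_0(\cdot)\|_2\le 1$, each of the two summands has Euclidean norm at most $N^{-1}$, so $\|\nabla g(w_2)\|_2 \le 2N^{-1}$ by the triangle inequality. Combining, $\Delta_2(\mathcal{M}) = \|w_1-w_2\|_2 \le 2(\lambda N)^{-1} = 2\big(\lambda(|D_i|+|D^-|)\big)^{-1}$, as claimed. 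The point needing the most care — and what I expect to be the main obstacle — is the data-dependent feature normalization $\tilde h_0(x)=h_0(x)\big(\max_{x\in D_i\cup D^-}\|h_0(x)\|\big)^{-1}$: when $D_i$ is replaced by $D_i'$ the normalizer can in principle change, so that $g$ is no longer supported on a single pair of terms. I would handle this either by assuming the features are pre-scaled by a dataset-independent bound on $\|h_0\|$ — the regime in which the adapted theorem is stated, and under which $\|\tilde h_0(\cdot)\|_2\le 1$ holds as a fixed property — or via an extra perturbation estimate bounding the effect of the changing normalizer; in either case only the unit-norm feature bound enters the constant $2$.
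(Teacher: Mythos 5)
Your proof is correct and follows essentially the same route as the paper's: the Chaudhuri--Monteleoni-style stability argument combining $\lambda$-strong convexity of $J$ (via the gradient-monotonicity inequality and Cauchy--Schwarz at the perturbed minimizer) with the bound $\|\nabla g(w)\|\le 2(|D_i|+|D^-|)^{-1}$ on the gradient of the single-pair difference of objectives. The caveat you raise about the data-dependent normalization in $\tilde h_0$ is real --- the paper's proof silently treats the normalizer as fixed across the two adjacent datasets --- and your proposed reading (a dataset-independent bound ensuring $\|\tilde h_0(\cdot)\|\le 1$) is precisely the regime in which the stated sensitivity bound holds.
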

\begin{proof}
The proof is an adaptation of the result shown in \suppcite{chaudhuri2011differentially}. 
We have
\begin{align}
\begin{split}
    J(w, h_0, D_i, D^-) = &a\sum_{x\in D_i \cup D^-}l(t_x\langle w, \tilde{h}_0(x)\rangle)+\lambda R(w)
\end{split}
\end{align}
with $t_x=2(\mathbb{1}_{x\in D_i})-1\in[-1,1]$, $a=(|D_i|+|D^-|)^{-1}$ and $\tilde{h}_0(x)=h_0(x)(\max_{x\in D^-\cup D_i} \|h_0(x)\|)^{-1}$.

Let $D_i=\{x_1,..,x_N\}$ and $D_i'=\{x_1,..,x'_N\}$ be two local data sets that differ in only one element. For arbitrary $D^-$ and $h_0$ define
\begin{align}
w^* = \arg\min_{w} J(w, h_0, D_i, D^-),    
\end{align}
\begin{align}
v^* = \arg\min_{w} J(w, h_0, D_i', D^-),   
\end{align}
\begin{align}
    n(w)=J(w, h_0, D_i, D^-)
\end{align}
and
\begin{align}
    m(w) &= J(w, h_0, D_i, D^-)-J(w, h_0, D'_i, D^-)
\end{align}
Since 
\begin{align}
    m(w) =  a (l(t_x\langle w, h_0(x_N)\rangle)-l(t_x\langle w, h_0(x'_N)\rangle))
\end{align}
we have
\begin{align}
    \nabla m(w) = a (t_xl'(t_x\langle w, h_0(x_N)\rangle)h_0(x_N)^T-\\t_xl'(t_x\langle w, h_0(x'_N)\rangle)h_0(x'_N)^T)
\end{align}
which can be bounded in norm 
\begin{align}
    \|\nabla m(w)\| &= a (\|h_0(x_N)-h_0(x_N')\|)\\&\leq a (\|h_0(x_N)\|+\|h_0(x_N')\|)\\&\leq 2a
\end{align}
as $t_x\in[-1,1]$, $|l'(x)|\leq 1$ and 
\begin{align}
\|\tilde{h}_0(x)\|=\|h_0(x)(\max_{x\in D_i\cup D^-}h_0(x))^{-1}\|\leq1.
\end{align}

Furthermore, since $n(w)$ is $\lambda$-strongly convex it follows by Shalev-Schwartz inequality
\begin{align}
    (\nabla n(w^*) - \nabla n(v^*))^T(w^*-v^*)\geq\lambda\|w^*-v^*\|^2 .
\end{align}
Combining this result with Cauchy-Schwartz inequality and $\nabla m(v^*) = \nabla n(v^*) - \nabla n(w^*)$  yields
\begin{align}
    \|w^*-v^*\|\|\nabla m(v^*)\|&\geq (w^*-v^*)^T\nabla m(v^*)\\&=(w^*-v^*)^T(\nabla n(v^*)-\nabla n(w^*))\\&\geq\lambda \|w^*-v^*\|^2 
\end{align}
Thus
\begin{align}
    \|w^*-v^*\|\leq\frac{\|\nabla m(v^*)\|}{\lambda}\leq\frac{2a}{\lambda}
\end{align}
which concludes the proof.
\end{proof}

\section{Empirical Privacy Evaluation}
\label{supp:epirical_privacy}
Our proposed method is provably differentially private and achieves state-of-the-art performance, even at very conservative privacy levels. If not explicitly stated otherwise, all results presented in this study were achieved with $(\varepsilon, \delta)$-differentially private certainty scores at conservative privacy parameters $\delta=10^{-5}$ and $\varepsilon=0.1$. In this section, we additionally evaluate the privacy properties of the certainty scores empirically. Figure \ref{fig:similar_images} shows, for four different clients, the 5 images $x$ from the distillation data set $D_{distill}$, which were assigned the highest certainty score $s_i(x)$ by the client's scoring model $w_i^*$ (left column). Displayed next to the images are their 4 nearest neighbors $x'$ in feature space which maximize the cosine-similarity
\begin{align}
    \text{sim}(x,x') = \frac{\langle h_0(x),  h_0(x')\rangle}{\|h_0(x)\|\|h_0(x')\|}.
\end{align}
In this example the clients hold non-iid subsets of CIFAR-10 ($\alpha=0.01$) and the "Imagenet Dogs" (c.f. Appendix \ref{supp:iamgenet_subsets}) data set is used as auxiliary data. Using weighted ensemble distillation in this setting improves training performance from 48.46\% to 75.59\%. 
As we can see, while certainty scores are able to inform the distillation process and allow \textsc{FedAUX} to outperform baseline methods on heterogeneous data, they reveal only fuzzy, indirect information about the local training data. For instance, client 1, which in this example is mainly holding data from the airplane class, assigns the highest scores to pictures in the auxiliary data set that show dogs in cars or in front of blue skies. From this it could be concluded that a majority of the clients training data contains man-made objects in front of blue backgrounds, but direct exposure of single data points is improbable. 

Note that there exist also many FL scenarios in which the server is assumed to be trustworthy, and only the final trained model which is released to the public needs to be privately sanitized. In these settings, direct inspection of certainty scores by outside adversaries is not possible and thus privacy loss through certainty scores is even less critical. Future work could also explore the use encryption-based techniques for secure weighted aggregation of client predictions.
\begin{figure*}[t!]
    \centering
    \subfigure[Client 1: Images $x$ from the distill data set with the highest scores $s_i(x)$ and their nearest neighbors in feature space in the local data set $D_i$.]{\includegraphics[width=0.48\textwidth]{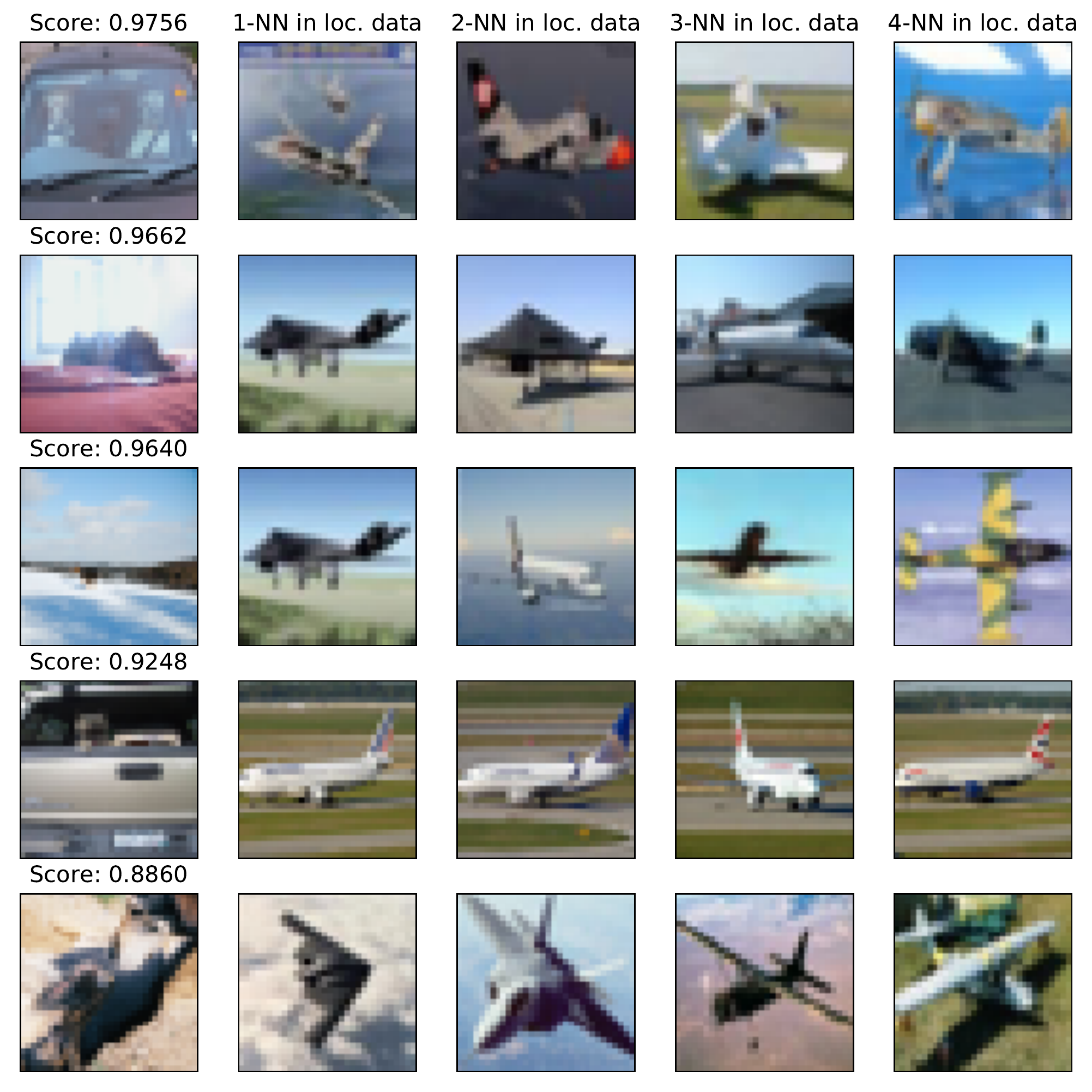}}\hfill
    \subfigure[Client 2: Images $x$ from the distill data set with the highest scores $s_i(x)$ and their nearest neighbors in feature space in the local data set $D_i$.]{\includegraphics[width=0.48\textwidth]{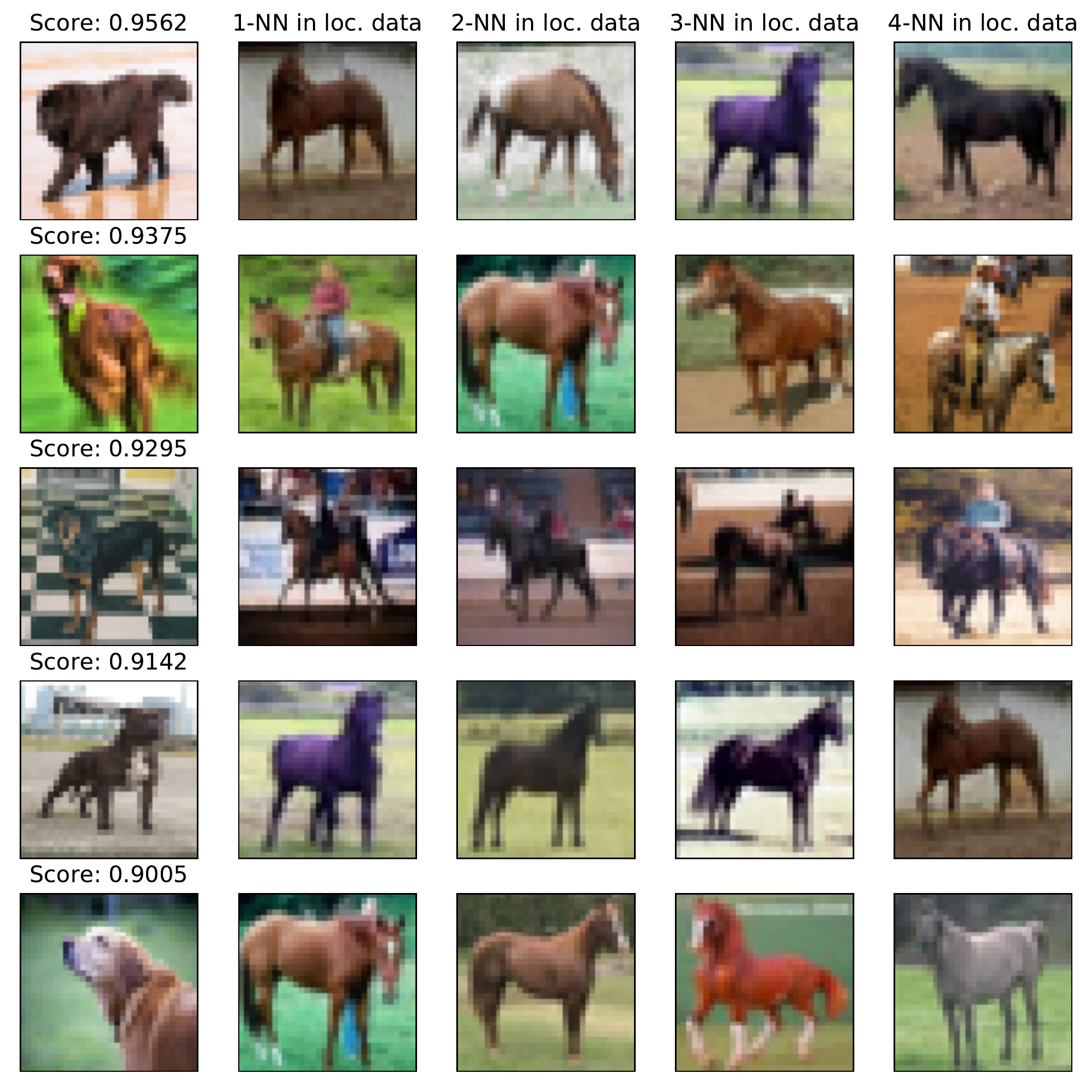}}
    ~
    \subfigure[Client 3: Images $x$ from the distill data set with the highest scores $s_i(x)$ and their nearest neighbors in feature space in the local data set $D_i$.]{\includegraphics[width=0.48\textwidth]{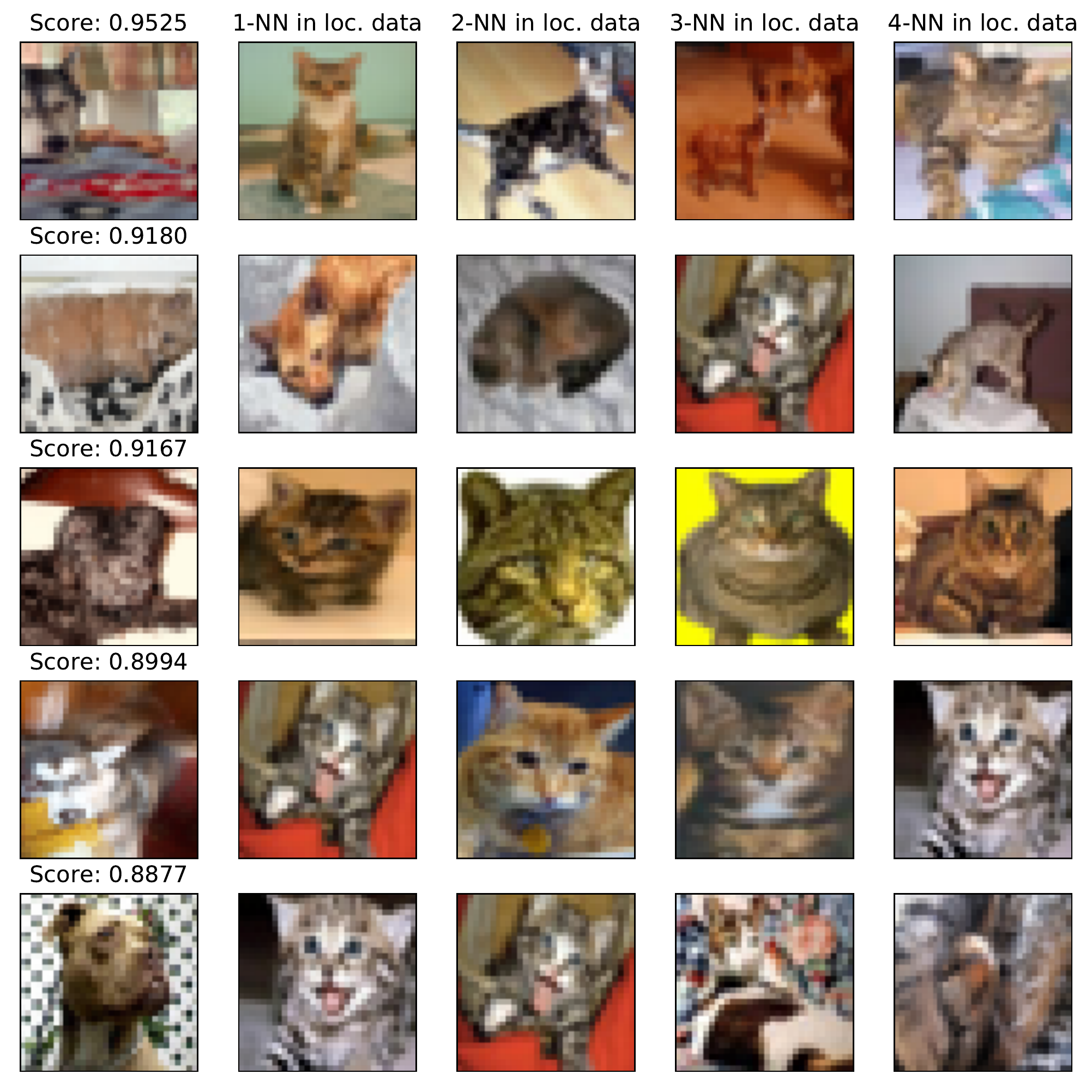}}\hfill
    \subfigure[Client 4: Images $x$ from the distill data set with the highest scores $s_i(x)$ and their nearest neighbors in feature space in the local data set $D_i$.]{\includegraphics[width=0.48\textwidth]{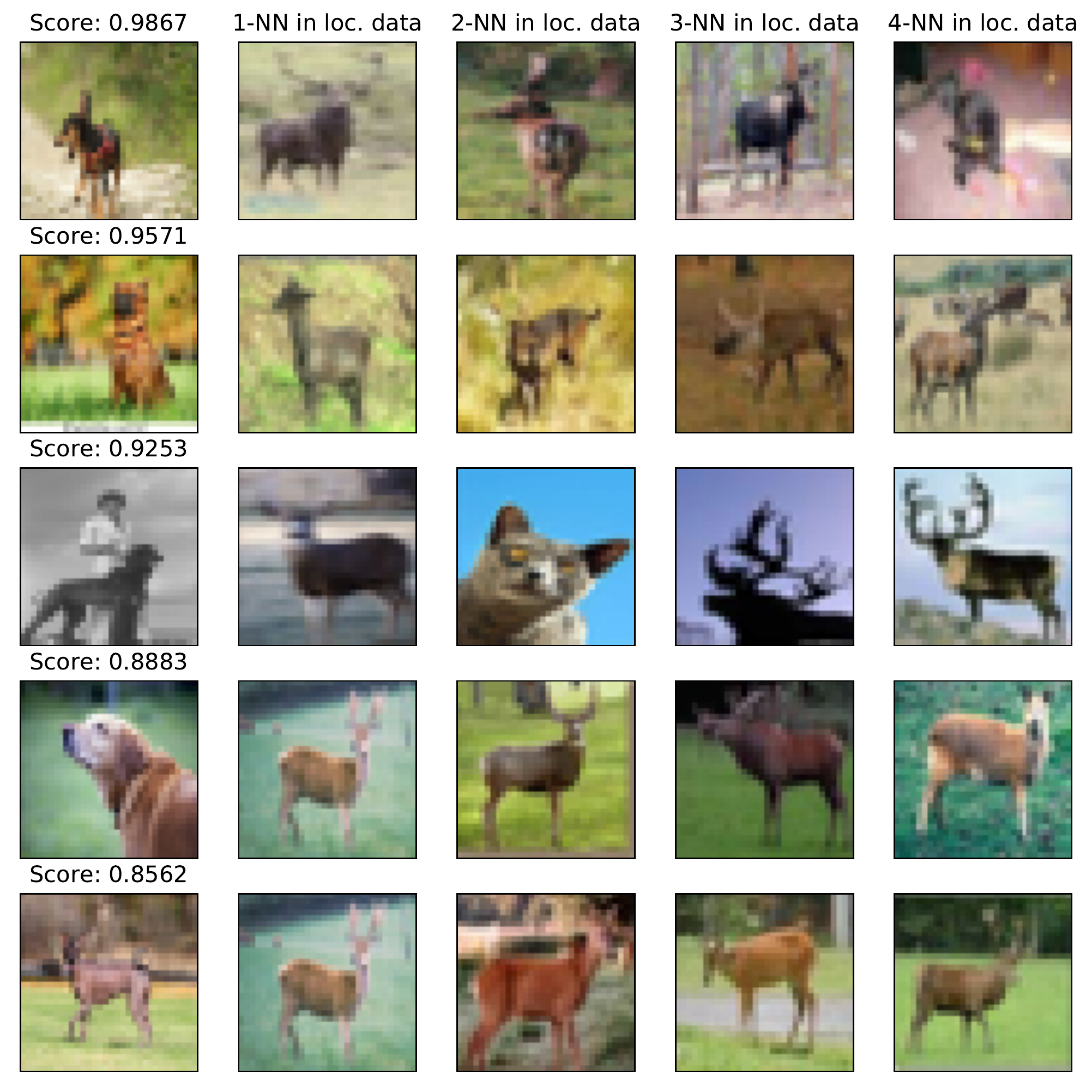}}
    
    \caption{Data points $x$ from the auxiliary data set which were assigned the highest scores $s_i(x)$ and their nearest neighbors in the data of 4 randomly selected clients $D_i$. Clients hold non-iid subsets from the CIFAR-10 data set ($\alpha=0.01$). Auxiliary data used is ImageNet Dogs (cf. Appendix \ref{supp:iamgenet_subsets}). No differential privacy is used.}
    \label{fig:similar_images}
\end{figure*}

\begin{figure*}[t!]
    \centering
    \subfigure[Images from the distill data set with the higher scores and their nearest neighbors in feature space in the local data set of client 1.]{\includegraphics[width=0.48\textwidth]{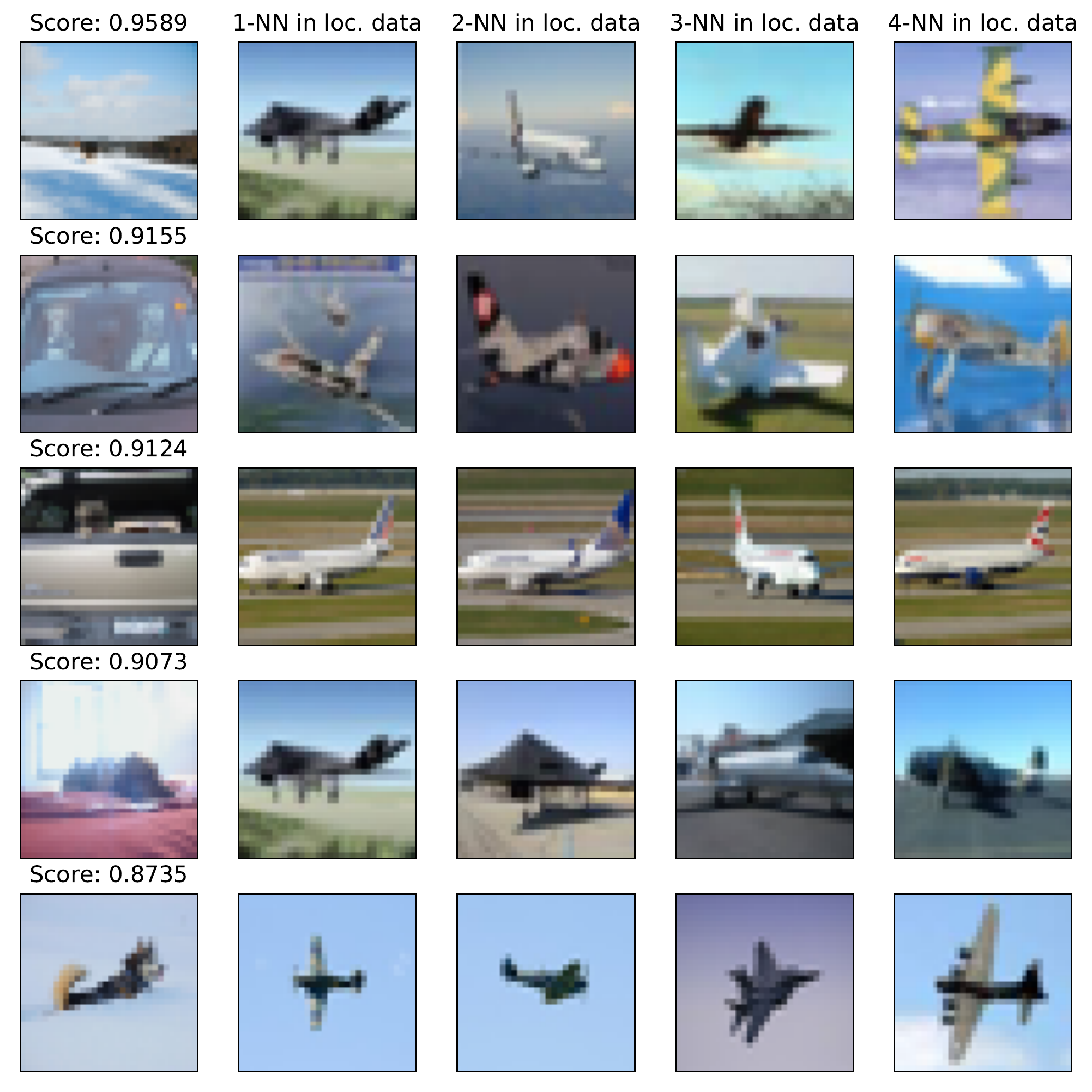}}\hfill
    \subfigure[Images from the distill data set with the higher scores and their nearest neighbors in feature space in the local data set of client 2.]{\includegraphics[width=0.48\textwidth]{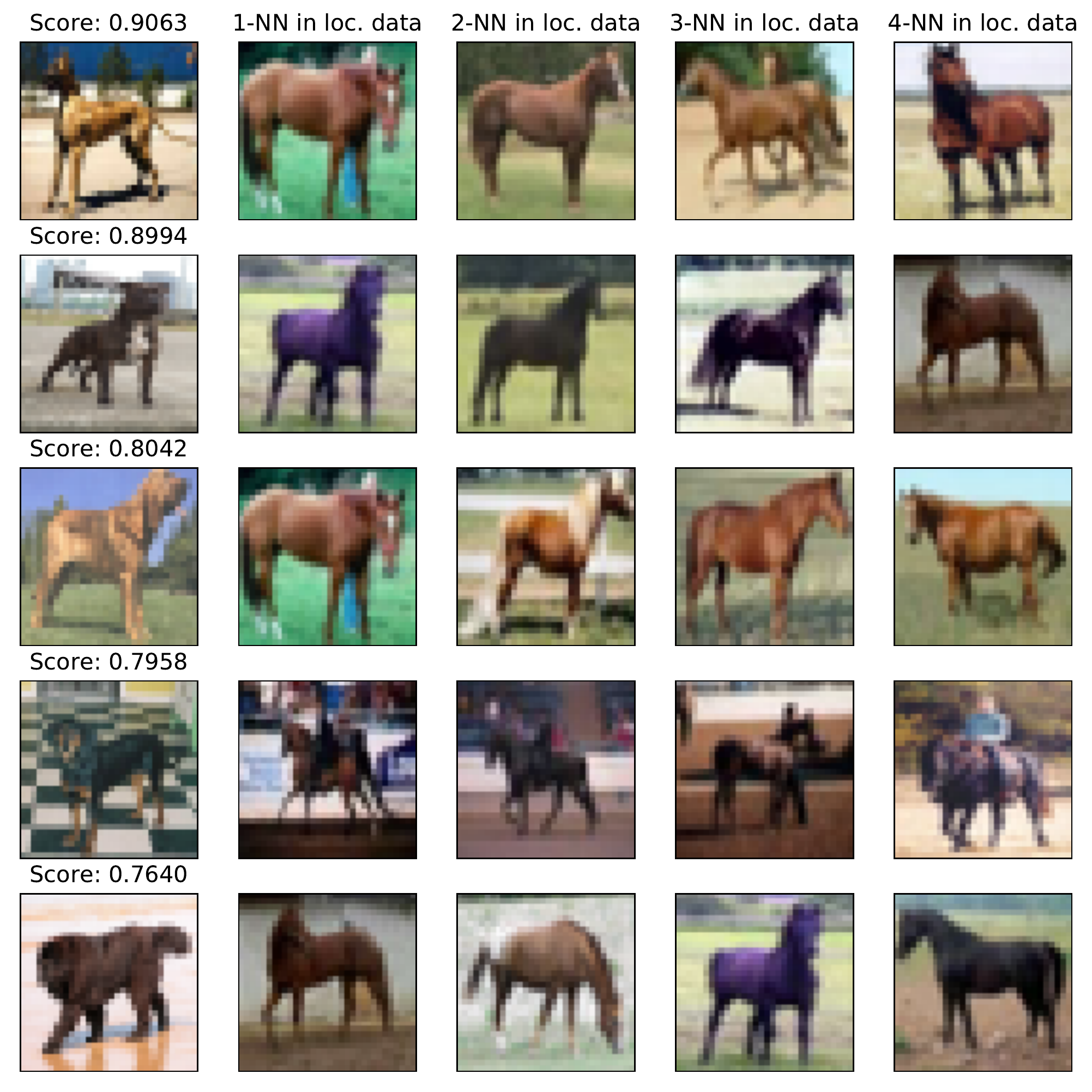}}
    ~
    \subfigure[Images from the distill data set with the higher scores and their nearest neighbors in feature space in the local data set of client 3.]{\includegraphics[width=0.48\textwidth]{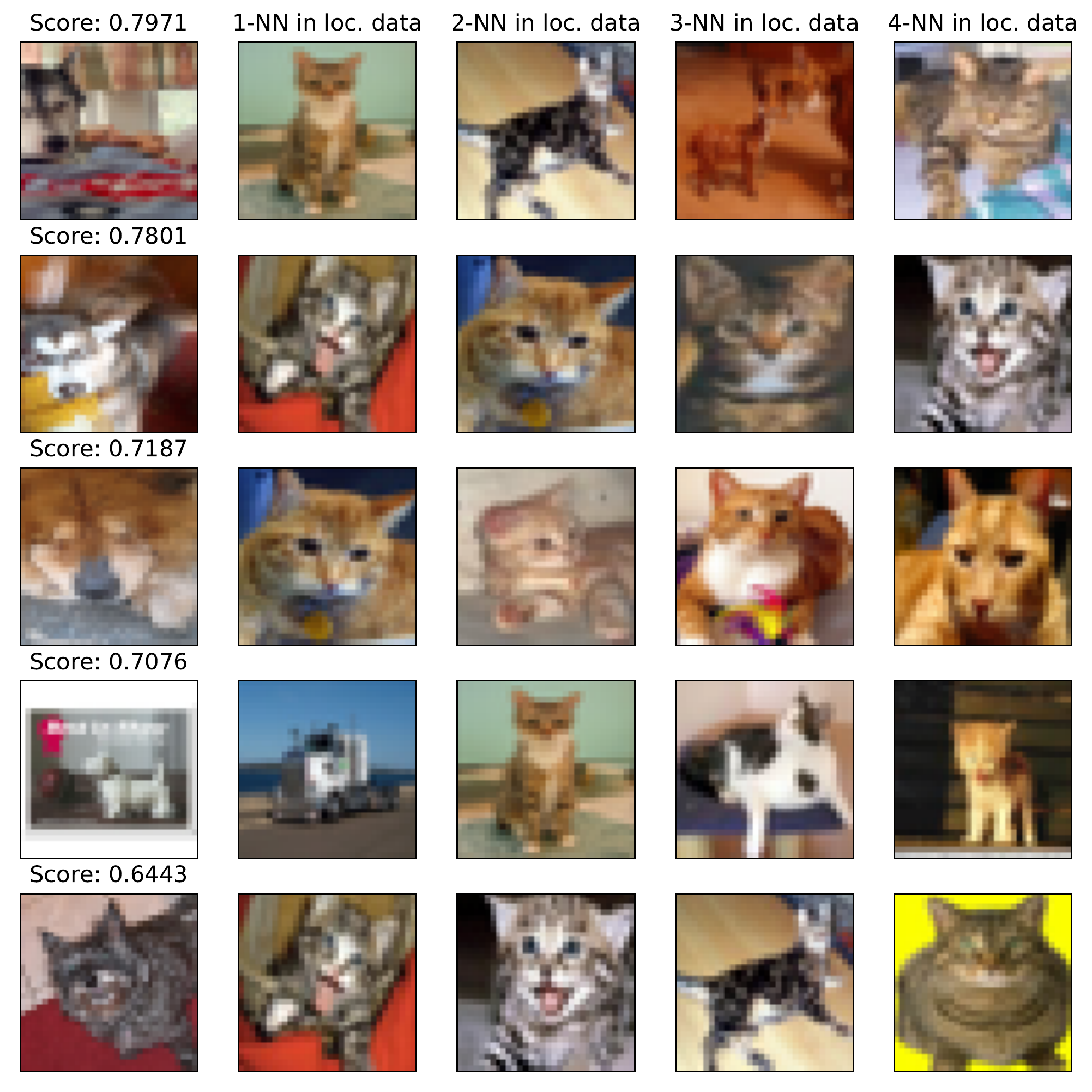}}\hfill
    \subfigure[Images from the distill data set with the higher scores and their nearest neighbors in feature space in the local data set of client 4.]{\includegraphics[width=0.48\textwidth]{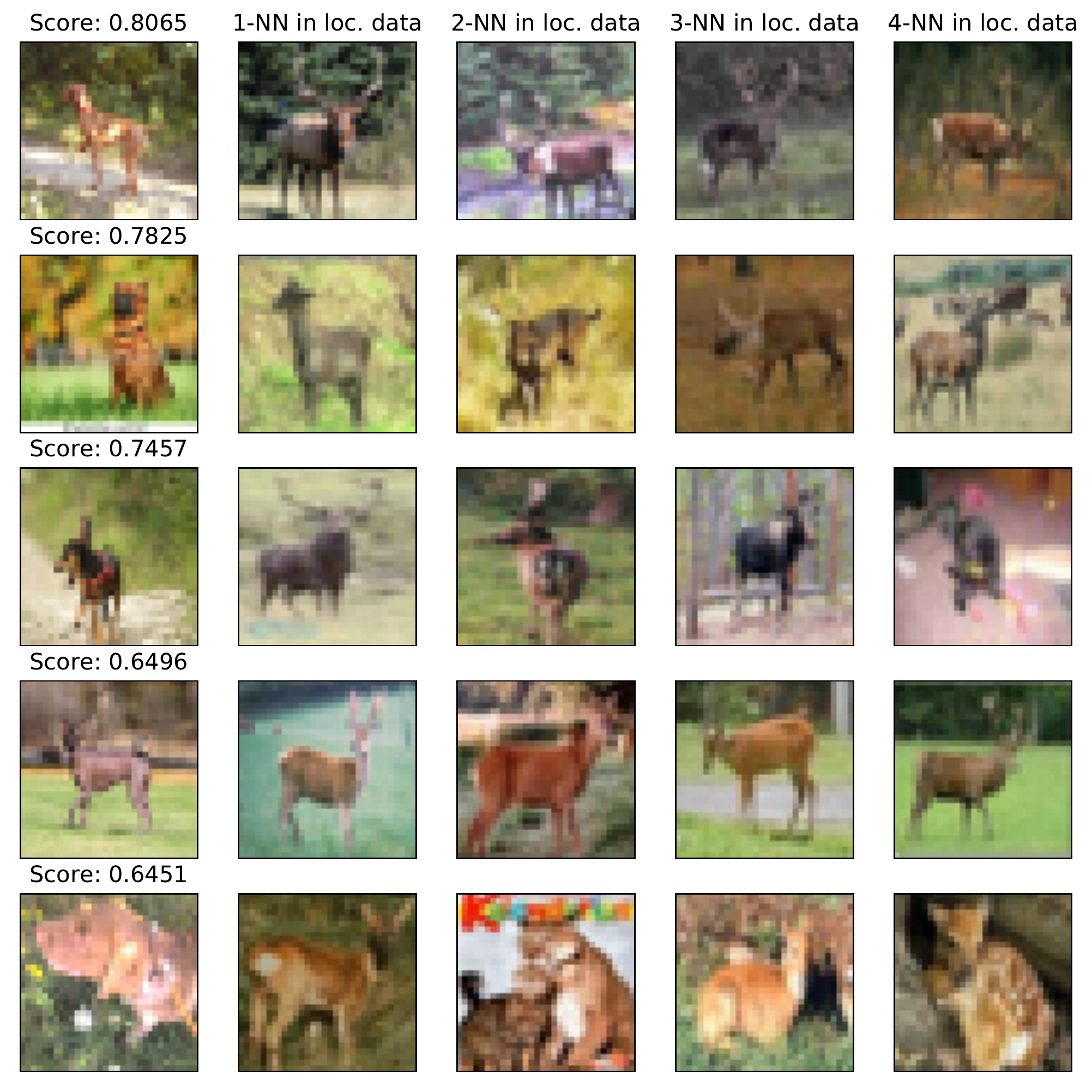}}
    
    \caption{Data points $x$ from the auxiliary data set which were assigned the highest scores $s_i(x)$ and their nearest neighbors in the data of 4 randomly selected clients $D_i$. Clients hold non-iid subsets from the CIFAR-10 data set ($\alpha=0.01$). Auxiliary data used is ImageNet Dogs (cf. Appendix \ref{supp:iamgenet_subsets}).  Scores obtained with differential privacy at $\varepsilon=0.1$, $\delta=10^{-5}$.}
    \label{fig:similar_images_with_dp}
\end{figure*}

\newpage

\supprefs

\end{document}